\newtheorem{proposition}{Proposition}
\theoremstyle{definition}
\title{Restless and Uncertain: Robust Policies for Restless Bandits \\ via Deep Multi-Agent Reinforcement Learning}
\author[1]{\href{mailto:<jkillian@g.harvard.edu>?Subject=Your UAI 2022 paper}{Jackson~A.~Killian}{}}
\author[1]{\href{mailto:<lily_xu@g.harvard.edu>?Subject=Your UAI 2022 paper}{Lily~Xu}{}}
\author[1,2]{\href{mailto:<arpitabiswas@seas.harvard.edu>?Subject=Your UAI 2022 paper}{Arpita~Biswas}{}}
\author[1,2]{\href{mailto:<milind_tambe@harvard.edu>?Subject=Your UAI 2022 paper}{Milind~Tambe}{}}
\affil[1]{%
    Computer Science,
    Harvard University,
    Cambridge, MA, USA
}
\affil[2]{%
    Center for Research on Computation and Society,
    Harvard University,
    Cambridge, MA, USA
}
\begin{document}
\maketitle

\begin{abstract}
We introduce robustness in \textit{restless multi-armed bandits} (RMABs), a popular model for constrained resource allocation among independent stochastic processes (arms). Nearly all RMAB techniques assume stochastic dynamics are precisely known. However, in many real-world settings, dynamics are estimated with significant \emph{uncertainty}, e.g., via historical data, which can lead to bad outcomes if ignored. To address this, we develop an algorithm to compute minimax regret--robust policies for RMABs.
Our approach uses a double oracle framework (oracles for \textit{agent} and \textit{nature}), which is often used for single-process robust planning but requires significant new techniques to accommodate the combinatorial nature of RMABs. Specifically, we design a deep reinforcement learning (RL) algorithm, DDLPO, which tackles the combinatorial challenge by learning an auxiliary ``$\lambda$-network'' in tandem with policy networks per arm, greatly reducing sample complexity, with guarantees on convergence. DDLPO, of general interest, implements our reward-maximizing agent oracle. We then tackle the challenging regret-maximizing nature oracle, a non-stationary RL challenge, by formulating it as a multi-agent RL problem between a policy optimizer and adversarial nature. 
This formulation is of general interest---we solve it for RMABs by creating a multi-agent extension of DDLPO with a shared critic. We show our approaches work well in three experimental domains.
\end{abstract}

\section{Introduction}\label{sec:intro}

Restless multi-armed bandits (RMABs), a model for constrained resource allocation among $N$ independent stochastic processes (arms), are widely studied.  Traditionally a \textit{binary-action} problem, in which a planner decides whether or not to act on each of $N$ arms, here we consider the \textit{multi-action} generalization \citep{killian2021multiAction,glazebrook2011general} which more accurately captures challenging real-world planning problems. Salient examples of RMABs include scheduling \citep{bagheri2015restless,yang2018optimal}, machine replacement \citep{glazebrook2006some,ruiz2020multi},
aerial vehicle routing \citep{le2008multi},
anti-poaching patrol planning \citep{qian2016restless}, and healthcare \citep{lee2019optimal,mate2020collapsing}.
While these works have established important theoretical foundations, they share one key limitation: assuming stochastic dynamics are precisely known. 
Having exact knowledge of dynamics is impossible in many real-world problems. For example, in healthcare intervention planning, the probability that a patient will adhere to treatment after receiving an intervention is not perfectly known \textit{a priori}; in anti-poaching patrol planning, the probability of finding a poacher's snare at some location is not known with certainty. 

Accordingly, methods have been developed to learn RMAB policies \textit{online}, assuming no \textit{a priori} knowledge \citep{jung2019thompson,wang2020restless}. However, these methods require tens of thousands of samples to converge to good policies which is prohibitive for many real-world problems, e.g., in finite-length treatment settings such as tuberculosis \citep{mate2020collapsing} with only a few dozen rounds. Instead, real-world planners must make the most of noisy data at hand, estimating dynamics from historical data or consulting experts, inducing significant \emph{uncertainty}. RMAB techniques can be used to plan with point estimates, but we show that ignoring uncertainty can lead to arbitrarily bad policies.


To address these shortcomings and push RMABs toward wider real-world applicability, we introduce \emph{Robust RMABs}, a generalization of RMABs which allows stochastic dynamics to be specified as uncertainty intervals, rather than point estimates. This new problem is very computationally demanding, adding a combinatorial layer of complexity onto an already PSPACE-hard problem \citep{papadimitriou1994complexity}. Addressing this complexity gives rise to a rich set of challenges that necessitates the design of new techniques that not only help solve the robust objective we analyze, but also are of general interest to RMAB research.

Concretely, we plan under a \emph{minimax regret} objective, using a double oracle (DO) framework \citep{mcmahan2003planning} that has seen success in problems involving a \emph{single} Markov decision process (MDP) \citep{xu2021robust}. The DO approach casts the robust planning problem as a zero-sum game between an \emph{agent} oracle and adversarial \emph{nature} oracle. However, existing techniques fail for any non-trivially sized RMABs since the state and action spaces grow combinatorially in the number of arms $N$ and resource constraint $B$, respectively. 
Specifically, given $S$-sized state spaces for each arm, the full combinatorial problem has state space of size $S^N$ and action space--and thus policy-network output--of size $\binom{N}{B}$ (for binary-action RMAB; action space is larger with multi-action). At this size, we found that directly applying \citet{xu2021robust} to solve the full combinatorial problem as a single process fails to learn good policies for RMABs as small as $N=5$ arms, with $B=3$ 
and $S=2$. Moreover, under the minimax regret objective, the nature oracle is a particularly difficult challenge as it requires jointly searching the RMAB policy space and the continuous, uncertain space of transition probabilities. Previously, this objective has been posed as a non-stationary RL problem and solved heuristically with a single policy network \citep{xu2021robust}.
We improve the nature oracle by formulating it as a multi-agent RL problem and develop a novel solution method for RMABs. In summary, our contributions are:


\begin{enumerate}[leftmargin = *]
  \setlength\itemsep{0em}
\item We introduce the Robust RMAB problem with interval uncertainty over arm dynamics and develop techniques to solve a minimax regret objective via double oracle.
\item To enable the DO approach, we introduce DDLPO, a novel deep RL algorithm for RMABs, of general interest. DDLPO tackles the combinatorial complexity of RMABs by learning an auxiliary ``$\lambda$-network'' in tandem with individual arm policy networks, which greatly reduces training sample complexity. The procedure implements the reward-maximizing agent oracle, has convergence guarantees, and solves RMABs with multiple action types \citep{killian2021multiAction,glazebrook2011general}, the first deep RL procedure to do so. DDLPO also easily extends to more general weakly-coupled MDPs \citep{adelman2008relaxations,hawkins2003langrangian} and enables computing continuous-action policies, a previously unstudied RMAB direction.
\item We formulate the non-stationary regret-maximizing nature oracle as a multi-agent RL (MARL) problem, a framework of potential general interest in robust planning. We solve this problem in the combinatorially hard RMAB setting by extending DDLPO to include a shared critic and a continuous-action policy network for nature's selection of the uncertain transition dynamics.
\end{enumerate}

\section{Related Work}
\label{sec:related_work}




\paragraph{RMABs} The reward-maximizing, binary-action RMAB problem was introduced by \citet{whittle1988restless}. His widely used Whittle index policy \citep{mate2020collapsing,glazebrook2006some,bagheri2015restless} is asymptotically optimal under \textit{indexability} \citep{weber1990index}. \citet{glazebrook2011general} and \citet{hodge2015asymptotic} extended the Whittle index to multi-action RMABs with special monotonic structure, while \citet{killian2021multiAction} gave a more general Lagrange-based method. \citet{hawkins2003langrangian} studied methods for weakly coupled Markov decision processes (WCMDP), which generalize multi-action RMABs to have multiple constraints, and propose Lagrangian solutions for small problems. \citet{adelman2008relaxations} and \citet{gocgun2012lagrangian} followed by providing better solutions to WCMDPs but sacrifice scalability. All these works assumed precise knowledge of stochastic dynamics. Some recent works have studied online RMABs with unknown dynamics but all have prohibitively large sample complexity ~\citep{gafni2020learning,jung2019regret,biswas2021learn,killian2021Q}. None consider robust planning under environment uncertainty, which we address. 

Our work also relates to learning algorithms for \textit{stochastic} multi-armed bandit (MAB) problems \citep{min2020policy, boutilier2020differentiable,kuleshov2014algorithms}. However, since stochastic MABs follow a stateless reward process, learning algorithms utilize the fact that the true optimal policy simply selects the top $B$ reward-producing arms each round. Conversely, the arms in restless MABs have reward processes that follow MDPs, so the top $B$ arms to play each round is state- and action-dependent and 
constantly evolving, making both the learning and the planning problems much more challenging, and which our algorithms address.

\paragraph{RL for RMABs} A few recent works learn Whittle indices for indexable binary-action RMABs using (i)~deep RL (DRL) \citep{nakhleh2020neurwin} and (ii)~tabular Q-learning ~\citep{biswas2021learn,fu2019towards,avrachenkov2020whittle}. \citet{killian2021Q} take tabular Q-learning to the multi-action setting. In contrast, our DRL approach provides a more general solution to binary and multi-action RMAB domains, not requiring indexability or problem structure, and is far more scalable than tabular methods. We are also the first to handle continuous-action RMABs, key to the nature oracle. Also related is the space of combinatorial RL. However, most existing algorithms consider single-shot problems, e.g., traveling salesman \citep{kool2018attention,khalil2017learning}, which lack a notion of future state that is critical to solving any version of RMAB, and none accommodate the general cost/budget structure of multi-action RMAB \citep{song2019solving}; our methods address these limitations.

\paragraph{Robust planning} Work on robust planning in RL mainly focuses on maximin reward via robust adversarial RL \citep{pinto2017robust} or multi-agent RL (MARL) \citep{lanctot2017unified,li2019robust}, but maximin reward leads to overly conservative policies \citep{nguyen2014regret}. The minimax regret criterion \citep{braziunas2007minimax} avoids this pitfall, but this objective is challenging with very large or continuous strategy spaces. This can be addressed with the DO approach proposed by \citet{mcmahan2003planning} which explores a small subset of strategies while still guaranteeing optimal convergence \citep{gilbert2017double}. Subsequently, DO has been extended to optimize MARL problems with multiple selfish agents \citep{lanctot2017unified}. Recently, \citet{xu2021robust} used DO to solve a single Markov decision process (MDP) minimax-regret planning problem and used RL to implement the oracles. However, when applied to RMABs, the number of outputs in their policy network grows exponentially, as does the size of the state space being learned, both of which require prohibitively long training times beyond trivially sized RMABs. Accordingly, we found that their RL algorithms failed to scale past $N=5$ arms and $S=2$ states, whereas we show in Sec.~\ref{sec:experiments} that our algorithms solve problems that are orders of magnitude larger. Additionally, their approach is designed only for continuous state/action spaces, whereas our approach can find robust policies for any combination of discrete \emph{or} continuous state/action spaces. We accomplish this via our novel formulation of the nature oracle as a MARL problem, which decomposes the causes of non-stationarity, i.e., agent and nature, and learn them with separate networks.




\begin{figure*}[t]
\centering
\includegraphics[width=0.95\linewidth]{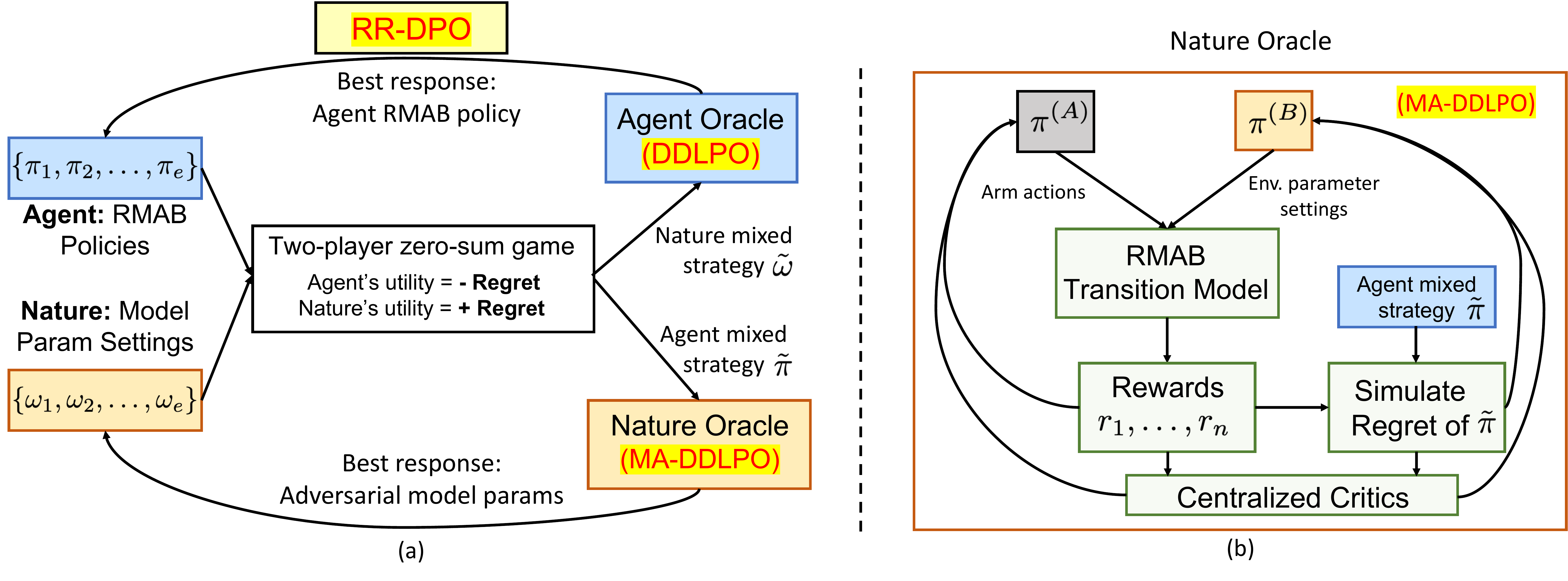} 
\caption{\textbf{(a)}~Proposed framework for solving the Robust RMAB problem. The main loop follows a DO approach to iteratively compute a minimax regret optimal RMAB policy where each oracle is a novel DRL algorithm for RMABs. 
\textbf{(b)}~The nature oracle: a novel multi-agent RL formulation of RMAB, that tackles non-stationarity with a centralized critic.}
\label{fig:concept} 
\end{figure*}

\section{Preliminaries}
\label{sec:preliminaries}

We consider the multi-action RMAB setting with $N$ arms \citep{killian2021multiAction,glazebrook2011general}, which generalizes classical binary-action RMABs \citep{whittle1988restless}.\footnote{Our approaches also easily extend to weakly-coupled MDPs, which allow multiple budget constraints \citep{hawkins2003langrangian}, as well as to continuous-action RMABs, previously unstudied.} Each arm $n\in [N]$ follows an MDP $(\mathcal{S}_n, \mathcal{A}_n, \mathcal{C}_n, T_n, R_n, \beta)$, where $\mathcal{S}_n$ is a set of finite, discrete states; $\mathcal{A}_n$ is a set of finite, discrete actions; $\mathcal{C}_n : \mathcal{A}_n \xrightarrow[]{} \mathbb{R}$ defines action costs, where $\mathcal{C}_n[0] = 0$ encodes a no-cost ``passive action'' for all arms; $T_n: \mathcal{S}_n\times \mathcal{A}_n \times \mathcal{S}_n \xrightarrow[]{} [0,1]$ gives the probability of transitioning from one state to another given an action; $R_n:\mathcal{S}_n \xrightarrow[]{} \mathbb{R}$ is a reward function; and $\beta \in [0, 1)$ is the discount factor. For ease of exposition, let $\mathcal{S}_n, \mathcal{A}_n, \mathcal{C}_n,$ and $R_n$ be the same for all $n\in[N]$, and thus drop the subscript $n$, though all methods apply to the general case. Let $\bm{s}$ be an $N$-length vector of states over all arms and let $\bm{A} \in \{0,1\}^{N\times |\mathcal{A}|}$ be a decision matrix that one-hot-encodes the action taken on each arm. The planner computes policies $\pi$ which map states $\bm{s}$ to actions $\bm{A}$ with the constraint that the sum cost of actions is less than a budget $B$ in every round $t \in [H]$. 


We extend multi-action RMABs to the robust setting in which the exact transition probabilities are unknown. Instead, the transition dynamics $T_n$ of each arm $n \in[N]$ are determined by a set of parameters $\omega_n \in \Omega_n$, each within a given interval uncertainty $\underline{\overline{\omega}}_n:=[\underline{\omega}_{n}, \overline{\omega}_{n}]$. Let $\omega$ be a given parameter setting such that $\omega_{n} \in \underline{\overline{\omega}}_n$ for all $n \in [N]$. Let $G(\pi,\omega) = \mathbb{E}[\sum_{t=1}^{H}\beta^t \sum_{n\in [N]}R(\bm{s}^n_t) \mid \pi, \omega]$ be the planner's expected discounted reward under $\pi$ and $\omega$, where $\bm{s}_n^t$ is the state of arm $n$ at time $t$. Then, \emph{regret} is defined: 
\begin{align}
L(\pi,\omega) = G(\pi^\star_{\omega},\omega) - G(\pi,\omega) \ ,
\label{eq:regret}
\end{align}
where $\pi^\star_{\omega}$ is the optimal reward-maximizing policy under $\omega$. In our robust setting, our objective is to compute a policy~$\pi^{\dagger}$ that minimizes the maximum regret~$L$ possible for any realization of $\omega$, i.e.:
\begin{align}
    \pi^{\dagger} = \min_{\pi}\max_{{\omega}}{L(\pi,{\omega})} \ .
    \label{eq:minimax}
\end{align}
This problem is computationally expensive to solve since simply computing a policy~$\pi$ that maximizes the reward $G(\pi,\omega)$ is PSPACE-hard \citep{papadimitriou1994complexity} even when the $T_n$ are known, i.e., $\omega$ is given. 

A more tractable approach for computing multi-action RMAB policies $\pi$ is to utilize the Lagrangian relaxation \citep{hawkins2003langrangian,killian2021multiAction}, reproduced below.
For a given $\omega$, the optimal policy $\pi^\star_{\omega}$ maximizes the constrained Bellman equation:
\begin{align}\label{eq:combined_value_function}
    J(\bm{s}) &= \max_{\bm{A}^c}\left\{\sum_{n=1}^{N} R(\bm{s}_n) + \beta \mathop{\mathbb{E}}_{\omega}[J(\bm{s}^\prime) \mid \bm{s}, \bm{A}^c]\right\} \\
    & \hspace{-6mm}\text{where } \bm{A}^c \subseteq \bm{A} \nonumber \\ 
    \text{s.t. } &\sum_{n=1}^{N}\sum_{j=1}^{|\mathcal{A}|} \bm{A}_{nj}c_{j} \le B
    \qquad
    \sum_{j=1}^{|\mathcal{A}|} \bm{A}_{nj} = 1 \hspace{2mm} \forall n \in [N]  \nonumber
\end{align}
where $\bm{A}_{nj} = 1$ if the $j^\text{th}$ action is taken on arm $n$ (else 0) and $c_{j} \in \mathcal{C}$ is the $j^\text{th}$ action cost. We then take the Lagrangian relaxation of the budget constraint~\citep{hawkins2003langrangian}, giving:
\begin{align}
    &J(\bm{s}, \lambda^\star) = \min_{\lambda} \left( \frac{\lambda B}{1-\beta} + \sum_{n=1}^{N}\max_{j\in|\mathcal{A}|}\{Q_n(\bm{s}_n, a_{nj}, \lambda)\} \right) \label{eq:decoupled_value_func} \\
    &\quad \text{where }\hspace{1mm} Q_n(\bm{s}_n, a_{nj}, \lambda) =
    R(\bm{s}_n) - \lambda c_{j} + \nonumber \\ 
    &\quad \qquad \beta \mathbb{E}_{\omega} \left[ Q_n(\bm{s}_n^{\prime}, a_{nj}, \lambda) \mid \pi^{La}_{\omega}(\lambda) \right] \ . \label{eq:arm_value_func_lagrange}
\end{align}
Here, $a_{nj}$ is the $j^\text{th}$ action of arm $n$, $Q$ is the state-action value function, and  $\pi^{La}_{\omega}(\lambda)$ is the optimal policy for a given $\lambda$. The key insight is that this relaxation decouples the value functions of the arms, except for the shared $\lambda$, i.e., for a given value of $\lambda$, all $Q_n$ could be solved via $N$ individual value iterations. However, finding and setting $\lambda:= \lambda^\star$ is critical to finding good policies for multi-action RMABs \citep{killian2021multiAction,glazebrook2011general}, where $\pi^{La}_{\omega}(\lambda^\star)$ is used to recover a policy that respects the original budget constraint by solving a knapsack with $Q_n(\bm{s}_n, a_{nj}, \lambda^\star)$ as values, $\mathcal{C}$ as weights, and the constraints of Eq.~\ref{eq:combined_value_function}, then taking the actions according to the $Q_n$ in the solved knapsack. The knapsack solution finds the combination of actions with the largest sum of learned $Q_n(\bm{s}_n, a_{nj}, \lambda^\star)$ values which still respects the budget. The integer program for the knapsack is given in Appendix~\ref{sec:appendix:knapsack} and has time complexity $\mathcal{O}(N|\mathcal{A}|B)$ \citep{killian2021multiAction}.


\section{Solving Robust RMAB\lowercase{s}}
\label{sec:robust-rmab}

We now build our approach for finding robust RMAB policies, visualized in Fig.~\ref{fig:concept}(a). We use an iterative DO approach which achieves the minimax regret objective of Eq.~\ref{eq:minimax} by casting the optimization problem as a zero-sum game between two players: an \textit{agent} which learns policies $\pi$ to minimize regret, and an adversarial \textit{nature} which selects environment parameters $\omega$ to maximize regret of the agent. 
In this two-player game, the \textit{pure strategy} space for the agent is the set of all feasible RMAB policies $\pi$ that respect the budget constraint. The pure strategy space for nature is a continuous, closed set of parameters $\omega$ within the given uncertainty intervals. The algorithm maintains a pure strategy set for the agent and nature (Fig.~\ref{fig:concept}(a) left boxes); each iteration, these strategy sets are used to compute a \textit{mixed strategy}---i.e., a probability distribution over pure strategies---Nash equilibrium in a regret game (Fig.~\ref{fig:concept}(a) center). Each oracle then learns a best response against the opponent's mixed strategy to add to its strategy set (Fig.~\ref{fig:concept}(a) right boxes). 

The agent oracle's goal is to find an RMAB policy $\pi$, or pure strategy, to minimize regret (Eq.~\ref{eq:regret}) given a nature mixed strategy $\tilde{\omega}$. That is, the agent minimizes $L(\pi,\tilde{\omega})$ w.r.t.\ $\pi$, while $\tilde{\omega}$ is constant. Recall from Eq.~\ref{eq:regret} that
$L(\pi,\tilde{\omega}) = G(\pi^\star_{\tilde{\omega}},\tilde{\omega}) - G(\pi,\tilde{\omega})$.
Since $\tilde{\omega}$ and $\pi^\star_{\tilde{\omega}}$ are constant, then the first term $G(\pi^\star_{\tilde{\omega}},\tilde{\omega})$ is also constant. Thus minimizing $L(\pi,\tilde{\omega})$ is equivalent to maximizing the second term $G(\pi,\tilde{\omega})$, which is maximal at $\pi=\pi^\star_{\tilde{\omega}}$. In other words, the agent oracle must compute an optimal reward-maximizing policy w.r.t.\ $\tilde{\omega}$. Such a reward-maximizing objective aligns with existing RL techniques, but still requires that we address the challenge of learning in the combinatorial state and action spaces of the RMAB. To address this challenge, \emph{we propose a new RL method which decomposes the RMAB into $N$ per-arm learning problems and a complementary $\lambda$-network learning problem}, which together learn to spend limited budget where it will give the best return, detailed in Sec.~\ref{sec:rl-rmab}.

Conversely, the nature oracle seeks to find a parameter setting $\omega$, or pure strategy, that maximizes the agent's regret given a mixed strategy $\tilde{\pi}$, i.e., maximize $L(\tilde{\pi},\omega)$ with respect to $\omega$, while $\tilde{\pi}$ is fixed. This objective is even more challenging because both $G(\pi^\star_{\omega},\omega)$ and $G(\tilde{\pi},\omega)$ are functions of $\omega$. Most critically, computing $G(\pi^\star_{\omega},\omega)$ requires obtaining an optimal policy $\pi^\star_{\omega}$ as $\omega$ changes in the optimization---this amounts to a planning problem in which an agent must learn an optimal policy while the environment changes, controlled by $\omega$, making the nature oracle difficult to solve. Moreover, in the interval uncertainty setting we consider, $\omega$ is defined by a space of continuous values; thus nature's pure strategy space is infinite, making the problem even more complex, since it cannot be exhaustively searched. 

\emph{To tackle this complexity we propose a novel method for implementing the regret-maximizing nature oracle by casting it as an MARL problem}. The approach, visualized in Fig.~\ref{fig:concept}(b), trains one auxiliary agent to solve for a policy $\pi^\star_{\omega}$ ($\pi^A$ in Fig.~\ref{fig:concept}(b)), needed to compute $G(\pi^\star_{\omega},\omega)$ in the regret term, and simultaneously trains a second agent to learn worst-case parameters $\omega$ ($\pi^B$ in Fig.~\ref{fig:concept}(b)) that minimize $G(\tilde{\pi},\omega)$---together, these will maximize the regret  $L(\tilde{\pi},\omega)$. With this MARL setup, we mitigate nonstationarity through centralized critic networks which allow each agent to include the other's actions in their learned state space.
Solving a MARL problem requires an RL algorithm to optimize the underlying policy, so we first introduce our novel RL approach, DDLPO, to solve RMABs (Sec.~\ref{sec:rl-rmab}) as a part of our agent oracle and then use the algorithm as the backbone of our nature oracle (Sec.~\ref{sec:marl-rmab}).


\subsection{Agent Oracle: Deep RL for RMAB}
\label{sec:rl-rmab}

\begin{algorithm}[t]
\caption{DDLPO}
\label{alg:ddlpo}
\begin{flushleft}
\textbf{Input}: Initial state $\bm{s}_0$, nature mixed strategy $\tilde{\omega}$, \\
\texttt{n\_epochs}, \texttt{n\_subepochs}, \texttt{n\_steps}
\end{flushleft}
\begin{algorithmic}[1] 
\STATE Init.~policy networks $\theta_{n}$ for each arm $n \in [N]$
\STATE Init.~critic networks $\phi_{n}$ for each arm $n \in [N]$
\STATE Init.~$\lambda$-network $\Lambda$
\STATE Init.~\texttt{buff} = [] and $\bm{s}=\bm{s}_0$
\FOR{$\textit{epoch} = 1, 2, \ldots, \texttt{n\_epochs}$}
\STATE Sample $\lambda = \Lambda(\bm{s})$
\STATE Sample $\omega \sim \tilde{\omega}$
\FOR {$\textit{subepoch} = 1, \ldots, \texttt{n\_subepochs}$}
\FOR {timestep $t = 1, \ldots, \texttt{n\_steps}$}
\STATE Sample actions $a_n \sim \theta_n(s_n, \lambda)$ \hspace{1mm} $\forall n \in [N]$
\STATE $\bm{s}^\prime, \bm{r} = \texttt{Simulate}(\bm{s}, \bm{a}, \omega)$
\STATE Add tuple $(\bm{s}, \bm{a}, \bm{r}, \bm{s}^\prime, \lambda)$ to \texttt{buff} 
\STATE $\bm{s} = \bm{s}^\prime$
\ENDFOR
\STATE Update each $(\theta_n$, $\phi_n)$ pair via PPO, using trajectories in \texttt{buff}
\ENDFOR
\STATE Update $\Lambda$ via Prop.~\ref{thm:lambda_update} with costs of final subepoch
\ENDFOR
\STATE \textbf{return} $\theta_1, \ldots, \theta_N$, $\phi_1, \ldots, \phi_N$ and $\Lambda$
\end{algorithmic}
\end{algorithm}

\begin{algorithm}[t]
\caption{DDLPO-Act}
\label{alg:ddlpo-act}
\begin{flushleft}
\textbf{Input}: State $\bm{s}$, costs $\mathcal{C}$, budget $B$, agent actor, critic, and $\lambda$ networks $\theta_1,\ldots,\theta_N$, $\phi_1,\ldots,\phi_N$, $\Lambda$, selection method $\alpha$
\end{flushleft}
\begin{algorithmic}[1] 
\STATE $\lambda$ = $\Lambda(\bm{s})$
\IF{$\alpha$ == `GreedyProba'}
    \STATE $p_n = \theta_n(s_n,\lambda)$ \hspace{1mm} $\forall n \in [N]$ \COMMENT{Action distr.~of arm $n$}
    \color{black}\STATE $\bm{a}$ = \texttt{GreedyProba}$(\bm{p}, \mathcal{C}, B)$ \COMMENT{Greedily select highest probability actions until budget $B$ is reached}
\ELSIF{$\alpha$ == `QKnapsack'}
    \STATE $q_{nj} = \phi_n(s_n,a_{nj},\lambda)$ \hspace{1mm} $\forall n \in [N], \forall j \in [|\mathcal{A}|]$
    \STATE $\bm{a}$ = \texttt{QKnapsack}$(\bm{q}, \mathcal{C}, B)$ \COMMENT{Solve knapsack in Appendix \ref{sec:appendix:knapsack}}
\ELSIF[Binary action only]{$\alpha$ == `Whittle'}\color{black}
    \STATE $\bm{a} = \textsc{BinaSearch}(\bm{s}, B, \phi_1,...,\phi_N)$ \COMMENT{Appendix \ref{sec:appendix:knapsack}}
\ENDIF
\color{black}\STATE \textbf{return} $\bm{a}$
\end{algorithmic}
\end{algorithm}


Existing DRL approaches can be applied to the objective in Eq.~\ref{eq:combined_value_function}, but, as detailed in Sec.~\ref{sec:related_work}, they fail to scale past trivially sized RMAB problems since the action and state spaces grow exponentially in $N$.
To overcome this, we develop a novel DRL algorithm that instead solves the decoupled problem (Eq.~\ref{eq:decoupled_value_func}). The key benefit of decoupling is to render policies and $Q$ values of each arm independent, allowing us to learn $N$ independent networks with \textit{linearly sized state and action spaces, relieving the combinatorial burden of the learning problem}.
However, this decoupling approach introduces a new technical challenge in solving the dual objective which maximizes over policies but minimizes over $\lambda$, as discussed in Sec.~\ref{sec:preliminaries}.

To solve this, we derive a dual gradient update procedure that iteratively optimizes each objective as follows: (1)~holding $\lambda$ constant, learn $N$ independent policy networks via policy gradient, augmenting the state space to include $\lambda$ as input, as in Eq.~\ref{eq:decoupled_value_func}; (2)~use sampled trajectories from those learned policies as an estimate to update $\lambda$ towards its minimizing value via a novel gradient update rule. Another challenge is that $\lambda^\star$ of Eq.~\ref{eq:decoupled_value_func} depends on the current state of each arm---therefore, a key element of our approach is to learn this function $\lambda^\star(\bm{s})$ concurrently with our iterative optimization, using a neural network we call the $\lambda$-network that is parameterized by $\Lambda$. To train the $\lambda$-network, we use the following gradient update rule.
\begin{restatable}[]{proposition}{lambdaUpdate}
\label{thm:lambda_update}
To learn the value $\lambda$ that minimizes Eq.~\ref{eq:decoupled_value_func} given a state $\bm{s}$, the $\lambda$-network, parameterized by $\Lambda$, should be updated with the following gradient rule:
\begin{equation}
\begin{aligned}
    \Lambda_t = \Lambda_{t-1} - \alpha \left( \frac{B}{1-\beta} + \sum_{n=1}^{N}D_n(s_n, \lambda_{t-1}(\bm{s})) \right) 
\end{aligned}
\end{equation}
where $\alpha$ is the learning rate and $D_n(s_n, \lambda)$ is the negative of the expected $\beta$-discounted sum of action costs for arm $n$ starting at state $s_n$ under the optimal policy for arm $n$ for a given value of $\lambda$.
\end{restatable}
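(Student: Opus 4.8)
The plan is to recognize the stated rule as one step of gradient descent on the inner minimization of Eq.~\ref{eq:decoupled_value_func}, regarded as a function of the scalar $\lambda$ that the $\lambda$-network emits at state $\bm{s}$. Writing
\[
f(\lambda) \;:=\; \frac{\lambda B}{1-\beta} + \sum_{n=1}^{N} V_n(s_n,\lambda),
\qquad
V_n(s_n,\lambda) \;:=\; \max_{j\in[|\mathcal{A}|]} Q_n(s_n,a_{nj},\lambda),
\]
the minimizer satisfies $\partial_\lambda f = 0$, so the whole argument reduces to computing $\partial_\lambda f$ and matching it to $\frac{B}{1-\beta} + \sum_n D_n(s_n,\lambda)$; the update on the network parameters $\Lambda$ then follows by the chain rule, treating $-\partial_\lambda f$ as the gradient signal backpropagated through the output $\lambda(\bm{s})$. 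The first term is immediate: $\partial_\lambda\!\big(\tfrac{\lambda B}{1-\beta}\big) = \tfrac{B}{1-\beta}$.

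Next I would differentiate each $V_n$. Since $V_n(s_n,\lambda) = \max_j Q_n(s_n,a_{nj},\lambda)$ is a pointwise maximum over a finite action set, I would apply the envelope (Danskin) theorem: at every $\lambda$ where the maximizing action $j^\star(s_n)$ is unique, $\partial_\lambda V_n = \partial_\lambda Q_n(s_n,a_{nj^\star},\lambda)$. Differentiating the Bellman recursion of Eq.~\ref{eq:arm_value_func_lagrange} in $\lambda$---where the future is evaluated under the optimal policy $\pi^{La}_\omega(\lambda)$, so the next-state term is $V_n(s_n',\lambda)$---and interchanging the derivative with the finite-support expectation over $\omega$ gives $\partial_\lambda Q_n(s_n,a_{nj},\lambda) = -c_j + \beta\,\mathbb{E}_\omega[\partial_\lambda V_n(s_n',\lambda)]$. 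Combining these yields the fixed-point relation $\partial_\lambda V_n(s_n,\lambda) = -c_{j^\star(s_n)} + \beta\,\mathbb{E}_\omega[\partial_\lambda V_n(s_n',\lambda)]$, with $s_n'$ drawn under $\pi^{La}_\omega(\lambda)$. Unrolling this recursion along the induced trajectory telescopes the discount factors into
\[
\partial_\lambda V_n(s_n,\lambda)
\;=\; -\,\mathbb{E}\!\big[\textstyle\sum_{t=0}^{\infty}\beta^t c_{j^\star(s_n^{(t)})} \,\big|\, s_n^{(0)}=s_n\big]
\;=\; D_n(s_n,\lambda),
\]
which is precisely the negative expected $\beta$-discounted sum of action costs of arm $n$ under its optimal policy. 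Summing over $n$ and adding the first term gives $\partial_\lambda f = \frac{B}{1-\beta} + \sum_{n=1}^{N} D_n(s_n,\lambda)$, so the descent step $\lambda_t = \lambda_{t-1} - \alpha\,\partial_\lambda f$ is exactly the claimed rule.

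The hard part will be justifying the two interchanges of limit operations cleanly. For the envelope step I would argue that each $Q_n(s_n,a_{nj},\cdot)$ is differentiable in $\lambda$ (the penalty enters affinely and the map is continuous across value iteration), so $f$ is a finite sum of maxima of such functions plus a linear term, hence convex and piecewise differentiable; the points where $j^\star$ is non-unique form a measure-zero set, and at such ties any active-action selection is a valid subgradient, which suffices for an unbiased descent step. For the unrolling step the crux is that $\beta < 1$ together with bounded action costs $c_j$ makes the geometric series converge absolutely, which both guarantees that the fixed point $\partial_\lambda V_n$ exists and licenses swapping the infinite sum with the expectation via dominated convergence. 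Finally, since Alg.~\ref{alg:ddlpo} does not have the exact $D_n$ but estimates them from the sampled final-subepoch trajectories, I would close by noting that the rule is the stochastic-gradient counterpart of this exact update, with the Monte-Carlo $D_n$ supplying an unbiased gradient estimate.
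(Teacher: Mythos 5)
Your proposal is correct and follows essentially the same route as the paper's proof: differentiate Eq.~\ref{eq:decoupled_value_func} in $\lambda$, observe that the linear term contributes $B/(1-\beta)$, and that the $\lambda$-dependence of each $Q_n$ reduces (by an envelope argument) to the $-\lambda c_j$ penalty terms accumulated along the optimal policy's trajectory, yielding the negative discounted cost sum $D_n$. You simply make explicit the Danskin step, the Bellman unrolling, and the limit interchanges that the paper's proof states informally.
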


As $D_n$ lacks a closed form, the key insight we make is that it can be estimated by sampling multiple rollouts of the policy networks of all arms during training. As long as arm policies are trained for adequate time on the given value of $\lambda$, the gradient estimate will be accurate, i.e., $D_n(s_n, \lambda_{t-1}(\bm{s})) \approx -\sum_{k=0}^{K-1} \beta^k c^{k}_{n}$ where $K$ is the number of samples collected in an epoch and $c^{k}_{n}$ is the action cost of arm~$n$ in round~$k$. Moreover, this procedure will converge to the optimal parameters~$\Lambda^\star$ if the arm policies are optimal.

\begin{restatable}[]{proposition}{lambdaConvergence}
\label{thm:lambda_convergence}
Given arm policies corresponding to optimal $Q$-functions, 
Prop.~\ref{thm:lambda_update} will lead $\Lambda$ to converge to the optimal as the number of training epochs and $K\xrightarrow[]{}\infty$.
\end{restatable}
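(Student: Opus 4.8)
The plan is to recognize that Prop.~\ref{thm:lambda_update} is a (stochastic) subgradient-descent procedure on the outer minimization in Eq.~\ref{eq:decoupled_value_func}, and then to exploit convexity of that objective to force convergence to the global minimizer. Fix a state $\bm{s}$ and write the inner objective as $f(\lambda) = \frac{\lambda B}{1-\beta} + \sum_{n=1}^{N} V_n(s_n,\lambda)$, where $V_n(s_n,\lambda) = \max_{j} Q_n(s_n, a_{nj}, \lambda)$ is the optimal arm-$n$ value for a fixed multiplier $\lambda$; by definition $\lambda^\star(\bm{s})$ is the minimizer of $f$. First I would establish that $f$ is convex in $\lambda$: for any fixed arm policy $\pi$, the expected discounted return $\mathbb{E}[\sum_k \beta^k (R(s^k_n) - \lambda c^k_n)\mid \pi]$ is affine in $\lambda$ with slope equal to the negative expected discounted action-cost, so $V_n(s_n,\lambda)$, being the pointwise supremum over such affine functions, is convex; adding the linear term $\frac{\lambda B}{1-\beta}$ preserves convexity.

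Next I would identify the update direction with a subgradient of $f$. By an envelope (Danskin-type) argument applied to the optimal arm policy, the derivative of $V_n$ in $\lambda$ equals the negative expected discounted action-cost under that policy, which is exactly $D_n(s_n,\lambda)$; at multiplier values where the argmax action switches, $D_n$ returns the slope of one of the maximizing policies, which lies in the subdifferential. Hence $\frac{B}{1-\beta} + \sum_n D_n(s_n,\lambda)$ is a subgradient of $f$ at $\lambda$, and Prop.~\ref{thm:lambda_update} performs subgradient descent. Because the arm policies are assumed optimal for the current $\lambda$, this direction is computed against the true $V_n$, so no bias is introduced by suboptimal arm learning.

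The remaining step handles the sampling and invokes a convergence theorem. Since $D_n$ has no closed form, it is replaced by the Monte-Carlo estimate $-\sum_{k=0}^{K-1}\beta^k c^k_n$, which is asymptotically unbiased: its expectation differs from $D_n$ only by the truncated tail, bounded by $\beta^K c_{\max}/(1-\beta)\to 0$, and its variance is controlled by the law of large numbers over rollouts. Thus as $K\to\infty$ each update becomes an exact subgradient step, and as the number of epochs $\to\infty$, standard results on (stochastic) subgradient descent for convex functions with a suitably diminishing step size $\alpha$ (i.e.\ $\sum_t \alpha_t=\infty$, $\sum_t \alpha_t^2<\infty$) give $\Lambda_t \to \lambda^\star(\bm{s})$. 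Applying this pointwise over states yields convergence of the $\lambda$-network to $\Lambda^\star$.

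The hard part will be the second step: rigorously justifying the envelope-theorem identification of $D_n$ with a subgradient of the nonsmooth $V_n$, i.e.\ showing that differentiating through the optimal policy contributes nothing to first order and that the cost-based estimate selects a valid subgradient exactly at the kinks where the optimal action changes. Handling the interaction between this nonsmoothness and the stochastic estimation error is the main technical subtlety.
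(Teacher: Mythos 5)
Your proposal is correct and follows essentially the same route as the paper's proof: convexity of the Lagrangian objective in $\lambda$ (as a max over affine functions of $\lambda$ plus a linear term), identification of the update direction with the (sub)gradient given optimal arm policies, and unbiasedness of the Monte-Carlo cost estimates as $K\to\infty$. You are in fact more careful than the paper on the nonsmooth points (the Danskin/subgradient argument and the diminishing step-size conditions); the paper's only additional ingredients are the explicit requirements that the budget constraint not be imposed at training time (so the cost samples are unbiased) and that start states be sampled so every $\bm{s}$ is visited infinitely often.
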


Proofs are given in Appendix \ref{sec:appendix:proofs}. One interesting feature of this update rule is that to collect samples that reflect the proper gradient, the RMAB budget must not be imposed \textit{at training time}---rather, the policy networks and $\lambda$-network must be allowed to learn to play the Lagrange policy of Eq.~\ref{eq:decoupled_value_func}, which learns to spend the correct budget in expectation, via our iterative update procedure. Therefore, at training time, we sample actions randomly according to the actor network distributions, without imposing the budget constraint. However, \textit{at test time, we always take actions in a way that respects the budget constraint} as described in Alg.~\ref{alg:ddlpo-act}. Alg.~\ref{alg:ddlpo-act} chooses actions either by (1) selecting greedily by the probabilities of the arm actor networks (2) using the learned $Q(\lambda)$-functions of the arm critic networks to follow the Q-value-maximizing knapsack procedure (Appendix \ref{sec:appendix:knapsack}), or (3) in binary-action settings, using the $Q(\lambda)$-functions to follow a binary search procedure such that selected actions are equivalent to the Whittle index policy (Appendix \ref{sec:appendix:knapsack}).

In theory, the policy networks could be trained via any DRL procedure that ensures the above characteristics for training the $\lambda$-network. In practice, we train with proximal policy optimization (PPO) \citep{schulman2017proximal}, a state-of-the-art policy gradient approach. Importantly, PPO is also flexible enough to handle both discrete and continuous actions which is necessary for the nature oracle. 

Finally, to enable our iterative, dual-update procedure in practice, we need a mechanism to both (1)~explore new arm policy actions after an update to $\Lambda$, then (2)~exploit learned policy actions to develop good gradient estimates for $\Lambda$. We navigate this important trade-off by adding an entropy regularization term to the policy networks losses, controlled via a cyclical temperature parameter. 
We call our algorithm Deep Distributed Lagrange Policy Optimization (DDLPO), provide pseudocode in Algorithm~\ref{alg:ddlpo}, and include more implementation details in Appendix \ref{sec:appendix:hyperparams-and-details}.


\subsection{Nature Oracle: Multi-Agent RL}
\label{sec:marl-rmab}

Armed with a DRL procedure for learning RMAB policies, we now develop the MARL procedure, which we call MA-DDLPO, to implement the nature oracle. Recall that the challenge of the nature oracle is to jointly optimize a policy $\pi^\star_{\omega}$ and environment parameters~$\omega$. We propose to solve this optimization using MARL, designed to handle this form of non-stationarity \citep{lowe2017multi} via centralized critics. In our MARL setup, each of two ``players'' (i.e., the ``multiple agents'') will aim to compute $\pi^\star_{\omega}$ and $\omega$, respectively, with separate objectives. The procedure is visualized in Fig.~\ref{fig:concept}(b).

To implement the MARL nature oracle, we introduce two new players $A$ and $B$. Player $A$ is an \textit{auxiliary player} whose goal is to optimize the RMAB policy $\pi^\star_{\omega}$ given a changing $\omega$, i.e., the first term of regret (Eq.~\ref{eq:regret}. We call $A$ auxiliary because its learned policy will never be used outside the nature oracle; $A$ is only used to assist the nature oracle in computing the regret associated with a given $\omega$. Alternatively player $B$ is an adversarial player whose goal is the same as that of the nature oracle itself, i.e., to find parameters $\omega$ that maximize regret of the current agent mixed strategy $\tilde{\pi}$. We define a shared transition function for the environment in which the players act $T: \mathcal{S} \times \mathcal{A}_A \times \mathcal{A}_B \xrightarrow[]{} \mathcal{S}$. Here, $\mathcal{A}_A$ is the action space of the underlying multi-action RMAB. At a given state~$\bm{s}$, the action space $\mathcal{A}_B$ defines for player~$B$ actions~$\omega$ which, in general, depend on $\bm{s}$. That is, at each step, player $B$ selects environment parameters $\omega$, and thus transition probabilities that will influence the outcome of player $A$'s actions. We adopt the centralized critic idea from multi-agent PPO \citep{yu2021surprising} to our RMAB setting to create MA-DDLPO. 
A notable strength of our MARL approach is that it allows the discrete-space policy of player $A$ and the continuous-space policy of player $B$ to be learned by separate networks, simplifying training compared to an alternative combined-network approach. Moreover, our choice to use PPO offers a convenient way to learn both types of policies as separate networks, while utilizing a single framework of update rules. 

\begin{algorithm}[tb]
\caption{MA-DDLPO}
\label{alg:maddlpo}
\textbf{Input}: Agent mixed strategy $\tilde{\pi}$, \texttt{n\_epochs}, \\ 
\texttt{n\_subepochs}, \texttt{n\_steps}, \texttt{n\_sims}
\begin{flushleft}
\algsetup{indent=0.75em}
\begin{algorithmic}[1] 
\STATE Init.~player A: arm policy networks $\theta_{n}^{(A)}$ and arm critic networks $\phi_{n}^{(A)}~\forall n \in [N]$, and $\lambda$-network $\Lambda$
\STATE Init.~player B: environment parameter policy network $\theta^{(B)}$, critic network $\phi^{(B)}$
\STATE Init.~\texttt{buff} = []
\FOR{$\textit{epoch} = 1, 2, \ldots, \texttt{n\_epochs}$}
\STATE Sample $\bm{s}$ at random \\
\STATE Sample $\lambda = \Lambda(\bm{s})$ \\
\FOR {$\textit{subepoch} = 1, \ldots, \texttt{n\_subepochs}$}
\FOR{$t = 1, \ldots, \texttt{n\_steps}$}
\STATE Sample $a_n^{(A)} \sim \theta_{n}^{(A)}(s_n, \lambda)$ for each $n \in [N]$
\STATE Sample $\omega^{(B)} \sim \theta^{(B)}(\bm{s})$ 
\STATE $\bm{r}^{(A)}, \bm{s}^\prime = \textsc{simulate}(\bm{s}, \bm{a}^{(A)}, \omega^{(B)})$
\STATE $\tilde{r} = \textsc{simulate}(\bm{s}, \tilde{\pi}(\bm{s}), \omega^{(B)}, \texttt{n\_sims})$ \hspace{15mm} \COMMENT{(mean of \texttt{n\_sims} 1-step rollouts of $\tilde{\pi}$)}
\color{black}\STATE $r^{(B)} = \left(\sum_{n \in [N]}r_n^{(A)}\right) - \tilde{r}$ \COMMENT{(regret of $\tilde{\pi}$)}
\color{black}\STATE Add $(\bm{s}, \bm{a}^{(A)}, \omega^{(B)}, \bm{r}^{(A)}, r^{(B)}, \bm{s}^\prime, \lambda)$ to \texttt{buff}
\STATE $\bm{s} = \bm{s}^\prime$
\ENDFOR
\STATE Update each $(\theta_n^{(A)}$, $\phi_n^{(A)})$ pair using trajectories in \texttt{buff}. $\phi_n^{(A)}$ get $\omega^{(B)}$ as part of state
\ENDFOR
\STATE Update $\Lambda$ via Prop.~\ref{thm:lambda_update} with costs of final subepoch
\STATE Update $\theta^{(B)}, \phi^{(B)}$ using trajectories in \texttt{buff}. $\phi^{(B)}$ gets $\bm{a}^{(A)}$ as part of state
\ENDFOR
\STATE \textbf{return} $\theta^{(B)}$
\end{algorithmic}
\end{flushleft}
\end{algorithm}

A critical step is then to define the rewards for players $A$ and $B$ to match their objectives. Since player $A$'s objective is to find $\pi^\star_{\omega}$, it adopts the reward defined by the underlying RMAB, i.e., ${R}^{(A)}(\bm{s}) = \sum_{n=1}^N R_n(\bm{s})$. However, player $B$'s objective is to learn the regret-maximizing parameters $\omega$. This objective is challenging because it requires 
computing and optimizing over the returns of the fixed input policy $\tilde{\pi}$ with respect to all possible $\omega$, which is in general non-convex. In practice, to estimate the returns of $\tilde{\pi}_\omega$, we execute a series of roll-outs against player $B$'s current action. 
That is, given $\bm{s}$ at a given round, we sample an action from $\tilde{\pi}_{\omega}$ and the next state $\bm{s^\prime}$, and define the \textit{regret-based} reward of player~$B$, as ${R}^{(B)} = \sum_{n=1}^N R_n(s_n) - \frac{1}{Y}\sum_{y=1}^{Y}r_y^{\tilde{\pi},\omega}$, where $r_y^{\tilde{\pi},\omega}$ is the reward from each of $Y$ one-step Monte Carlo simulations of the mixed strategy $\tilde{\pi}$ in $\omega$.

To train the policies, player $A$ has the same policy network architecture as DDLPO, i.e., $N$ discrete policy networks and one $\lambda$-network, and the player $B$ actor network is a single continuous-action policy network. Since players $A$ and $B$ have separate reward functions, they have their own critic networks, but these critics are \emph{centralized} in that they both take the actions of the other as input. Other than the centralized critic, player $A$ is trained the same way as DDLPO, and player $B$ is trained in a standard PPO fashion. In practice, to ensure good gradient estimates for player $A$'s $\lambda$-network in MA-DDLPO, we keep player $B$'s network---and thus the environment---constant between $\Lambda$ updates, updating $B$'s network with the same frequency as the $\lambda$-network updates. Pseudocode for MA-DDLPO is given in Alg.~\ref{alg:maddlpo} and further details of its implementation are given in Appendix \ref{sec:appendix:hyperparams-and-details}. 

\begin{algorithm}[t]
\caption{RR-DPO}
\label{alg:full-alg}
\textbf{Input}: Environment simulator and parameter uncertainty intervals $\overline{\underline{\omega}}_n$ for all $n \in [N]$ \\
\textbf{Parameters}: Convergence threshold $\varepsilon$
\\
\textbf{Output}: Agent mixed strategy $\tilde{\pi}$
\begin{algorithmic}[1] 
\STATE $\Omega_0 = \{\omega_0\}$, with $\omega_0$ selected at random
\STATE $\Pi_0 = \{\pi_{B_1}, \pi_{B_2}, \ldots\}$, where $\pi_{B_i}$ are baseline and heuristic strategies \\
\FOR{epoch $e = 1, 2, \ldots$}
\STATE Solve for $(\tilde{\pi}_e, \tilde{\omega}_e)$, mixed Nash equilibrium of regret game with strategy sets $\Omega_{e-1}$ and $\Pi_{e-1}$ \\
\STATE $\pi_e = \textsc{DDLPO}(\tilde{\omega}_e)$ \\
\STATE $\omega_e = \textsc{MA-DDLPO}(\tilde{\pi}_e)$ \\
\STATE $\Omega_e = \Omega_{e-1} \cup \{\omega_e\}, \Pi_e = \Pi_{e-1} \cup \{\pi_e\}$
\IF{$L(\tilde{\pi}_e, \omega_e) - L(\tilde{\pi}_{e-1}, \tilde{\omega}_{e-1}) \leq \varepsilon$ and $L(\pi_e, \tilde{\omega}_e) - L(\tilde{\pi}_{e-1}, \tilde{\omega}_{e-1}) \leq \varepsilon$}
\STATE \textbf{break}
\ENDIF
\ENDFOR
\STATE \textbf{return} $\tilde{\pi}_e$
\end{algorithmic}
\end{algorithm}

\subsection{Minimax Regret RMAB Double Oracle}
We now have all the pieces needed to run our robust algorithm, Robust RMABs via Deep Policy Oracles (RR-DPO), visualized in Fig.~\ref{fig:concept}(a), with pseudocode presented in Algorithm~\ref{alg:full-alg}, adapted from the MIRROR framework \citep{xu2021robust}. 
We use DDLPO to instantiate the agent oracle, MA-DDLPO for the nature oracle, and run RR-DPO until the improvement in value for each player is within a tolerance~$\varepsilon$ or until a set number of iterations.

We now establish conditions under which RR-DPO converges to the minimax regret--optimal policy in finite iterations. 
In the binary-action setting, assuming each oracle returns true best responses, and under an analytical condition that is straightforward to achieve, i.e., finite pure strategy sets:\footnote{Straightforward to achieve for nature oracle via discretization.}

\begin{restatable}[]{proposition}{rrdpoConvergenceProposition}
\label{thm:rrdpo_convergence}
RR-DPO converges in a finite number of steps to the minimax regret-optimal policy.
\end{restatable}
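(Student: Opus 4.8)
The plan is to observe that, once the pure strategy sets are finite, the regret game is a finite two-player zero-sum matrix game, and then to replay the classical double-oracle finite-convergence argument of \citet{mcmahan2003planning} specialized to the regret payoff. First I would fix the finite pure strategy sets: in the binary-action setting the agent has finitely many deterministic RMAB policies because the state and action spaces are finite, and nature's continuous parameter space is rendered finite by the discretization noted in the statement. With both pure strategy sets finite, the mixed extension of the regret $L(\pi,\omega)$ is bilinear, since the payoff against mixed strategies is $\sum_{i,j} p_i q_j L(\pi_i,\omega_j)$ with the term $G(\pi^\star_{\omega_j},\omega_j)$ entering only as a per-column constant. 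Hence von Neumann's minimax theorem applies and the full game has a value $V^\star = \min_{\pi}\max_{\omega} L = \max_{\omega}\min_{\pi} L$ attained at a mixed Nash equilibrium.

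Next I would establish finite termination. By assumption each oracle returns a true pure best response to the opponent's current restricted-equilibrium mixed strategy: the agent returns $\pi_e = \arg\min_\pi L(\pi,\tilde\omega_e)$ (equivalently the reward-maximizer against $\tilde\omega_e$, as derived in Sec.~\ref{sec:robust-rmab}) and nature returns $\omega_e = \arg\max_\omega L(\tilde\pi_e,\omega)$. At each epoch either both best responses already lie in the current strategy sets, or at least one strictly new pure strategy is added to $\Pi_e$ or $\Omega_e$. Since the total number of pure strategies is finite, a new strategy can be added only finitely many times, so RR-DPO must halt after a finite number of epochs.

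Then I would prove optimality at termination. Writing $V_e = L(\tilde\pi_e,\tilde\omega_e)$ for the restricted-game value, restricting each player to its current set can only worsen that player's guarantee, so $v_A := \min_\pi L(\pi,\tilde\omega_e) \le V_e \le \max_\omega L(\tilde\pi_e,\omega) =: v_B$, where $v_A$ and $v_B$ are exactly the values the two oracles report. In the exact ($\varepsilon = 0$) regime the halting condition reduces to the standard double-oracle stopping rule, namely that neither oracle improves on the current restricted value, i.e.\ $v_A = V_e = v_B$. Then $\min_\pi\max_\omega L \le \max_\omega L(\tilde\pi_e,\omega) = V_e$ because $\tilde\pi_e$ is a feasible agent strategy, and $\max_\omega\min_\pi L \ge \min_\pi L(\pi,\tilde\omega_e) = V_e$ because $\tilde\omega_e$ is feasible for nature; combined with the minimax-theorem equality these force $V_e = V^\star$, and $\tilde\pi_e$ attains $\max_\omega L(\tilde\pi_e,\omega) = V^\star$, so it is minimax-regret optimal.

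The hardest part is the rigorous link between ``the best-response oracles produce no improvement'' and ``the restricted equilibrium is a Nash equilibrium of the full game,'' since the best responses are computed against mixed opponent strategies rather than pure ones; I would handle this by invoking the standard matrix-game fact that a best response to a mixed strategy may be taken pure and that its payoff equals the mixed-strategy expectation, which is precisely what justifies identifying $v_A,v_B$ with the oracle outputs above. A secondary point to verify carefully is that the agent oracle's reward-maximizing reduction of regret (Sec.~\ref{sec:robust-rmab}) indeed yields a true best response over the full policy class, and that the discretization of nature's space preserves the attained value up to $\varepsilon$; the exact-convergence claim corresponds to $\varepsilon = 0$, with $\varepsilon > 0$ yielding the natural approximate guarantee.
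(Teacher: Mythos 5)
Your proposal follows essentially the same route as the paper: assume finite pure strategy sets (nature's via discretization) and exact best responses, then invoke the classical double-oracle exhaustion argument to get finite termination at a full-game equilibrium of the regret matrix game. The only difference is one of emphasis --- you flesh out the matrix-game bookkeeping (bilinearity, the value sandwich $v_A \le V_e \le v_B$ at termination) that the paper leaves implicit, while the paper instead spends its effort on the point you defer as ``secondary,'' namely arguing that DDLPO-Act with the \texttt{Whittle} selection rule actually realizes a true best response for the agent oracle in the binary-action case.
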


\noindent In addition, we empirically verify that good policies are found outside of these conditions, and that RR-DPO converges using our continuous-strategy-space nature oracle. Further, we show that a policy that maximizes reward assuming a fixed parameter set can incur arbitrarily large regret when the parameters are changed (proofs in Appendix \ref{sec:appendix:proofs}). 

\begin{restatable}[]{proposition}{regretProposition}
\label{thm:regret}
In the Robust RMAB problem with interval uncertainty, the max regret of a reward-maximizing policy can be arbitrarily large compared to a minimax regret-optimal policy.
\end{restatable}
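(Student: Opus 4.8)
The plan is to prove the separation by an explicit worst-case construction: I would exhibit a family of Robust RMAB instances, indexed by a reward scale $M$, on which some reward-maximizing policy suffers max regret growing linearly in $M$, while the minimax regret--optimal policy has max regret bounded by a constant independent of $M$. Since the proposition only claims that such a gap \emph{can} occur, it suffices to construct one instance family, and I would keep it as small as possible.

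Concretely, I would take a two-arm instance with horizon $H=1$ (the argument lifts to larger horizons by placing each arm in an absorbing ``good'' state, which rescales every quantity by the common factor $\sum_{t=1}^H \beta^t$ and hence cancels from the ratio). Each arm starts in a reward-$0$ ``bad'' state, and the budget $B=1$ permits the single active action on exactly one arm. Acting on arm~$1$ (the ``gamble'') moves it to a ``good'' state of reward $M$ with probability $\omega_1$, where $\omega_1$ ranges over the full uncertainty interval $\underline{\overline{\omega}}_1=[0,1]$; acting on arm~$2$ (the ``hedge'') deterministically yields a good state of reward $M-1$, with $\omega_2$ a degenerate (known) interval. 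The only meaningful pure strategies are then ``treat arm~$1$,'' ``treat arm~$2$,'' and the dominated ``treat neither,'' and one computes $G(\text{treat }1,\omega)=\beta\,\omega_1 M$ and $G(\text{treat }2,\omega)=\beta(M-1)$, so the optimal value is $G(\pi^\star_\omega,\omega)=\beta\max\{\omega_1 M,\,M-1\}$.

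Next I would pin down the two policies being compared. Choosing the point estimate $\hat\omega_1$ at the upper endpoint $\overline{\omega}_1=1$ makes ``treat arm~$1$'' the unique reward-maximizer, since $\beta M>\beta(M-1)$; this models the realistic error of an overly optimistic estimate. Plugging into Eq.~\ref{eq:regret}, the regret of this policy at parameter $\omega_1$ is $\beta\big(\max\{\omega_1 M, M-1\}-\omega_1 M\big)$, which is maximized at $\omega_1=0$ and equals $\beta(M-1)$. For the hedge ``treat arm~$2$,'' the regret at $\omega_1$ is $\beta\big(\omega_1 M-(M-1)\big)^{+}\le\beta$, so its max regret is exactly $\beta$, which upper-bounds the minimax regret of Eq.~\ref{eq:minimax}. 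Taking the ratio gives $\beta(M-1)/\beta=M-1\to\infty$ (and the absolute gap $\beta(M-2)\to\infty$) as $M\to\infty$, establishing the claim.

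The main obstacle is not the arithmetic but ensuring the construction is a \emph{faithful} Robust RMAB instance in which the reward-maximizing policy is genuinely forced into the bad commitment. In particular I must rule out ``hedging by timing''---with a multi-round budget the planner could treat arm~$1$ early and fall back to arm~$2$ later, collapsing the gap---which is why I would fix $H=1$ (or use one-shot absorbing dynamics) so the budget forces a single irrevocable choice. I would also verify that no randomized agent policy beats the $\beta$ bound (it cannot push the max regret below the hedge's, and only an upper bound on minimax regret is needed), and that the chosen point estimate lies inside the stated interval so that ``treat arm~$1$'' is a legitimate reward-maximizing policy for that estimate.
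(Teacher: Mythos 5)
Your construction is correct and establishes the claim. It is the same genre of argument as the paper's --- a two-arm, horizon-1, budget-1 instance where both arms start in a reward-$0$ bad state and the active action probabilistically promotes an arm to a good state whose reward scales with a free parameter --- but the details differ in an instructive way. The paper makes \emph{both} arms' activation probabilities uncertain over $[0,1]$ with (nearly) symmetric rewards $R$ and $R-1$, which forces a case analysis over which ordering of $p_A,p_B$ the point estimate assumes, and the low-regret benchmark is the \emph{randomized} policy $a_A=a_B=0.5$ achieving regret about $R/2$ less than the committed policy. You instead make only arm~1 uncertain and introduce a \emph{known} deterministic hedge arm with reward $M-1$, so the low-regret benchmark is a pure strategy with max regret exactly $\beta$, no case analysis is needed, and both the ratio and the absolute gap diverge cleanly. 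What your version gives up is the paper's stronger (if informal) conclusion that \emph{every} point-estimate-based reward-maximizing policy is bad in its instance: in your instance a mid-range estimate $\hat\omega_1$ already selects the hedge and incurs only regret $\beta$, so only the optimistic estimate is penalized. Since the proposition is existential (``can be arbitrarily large''), exhibiting one reward-maximizing policy with unbounded regret suffices, and your argument is complete; your closing checks (ruling out hedging across rounds via $H=1$, confirming the point estimate lies in the interval, and noting that only an upper bound on the minimax regret is needed) cover the relevant edge cases.
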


\section{Experimental Evaluation}
\label{sec:experiments}
We first experimentally demonstrate the importance of robust planning in the presence of uncertainty using a hand-crafted synthetic domain (inspired by Prop.~\ref{thm:regret}). We then evaluate our algorithm on two challenging real-world-inspired public health planning scenarios which demonstrate the capability of our robust RMAB framework. All experiments use selection method $\alpha=$`GreedyProba' for DDLPO-Act (Alg.~\ref{alg:ddlpo-act}), which we found had the best performance.

We compare RR-DPO against five baselines. These baselines include three variations of the reward-maximizing approach from \citet{hawkins2003langrangian}, which, given fixed environment parameters $\omega$, at each step computes a Lagrange policy, then chooses actions following the knapsack procedure described in Sec.~\ref{sec:preliminaries}. The three variations are pessimistic (\textbf{HP}), mean (\textbf{HM}), and optimistic (\textbf{HO}), which assume the environment parameters are set at the lower bound, mean, and upper bound of the given intervals for each arm. We also implement \textbf{RLvMid}, which \textit{learns} (rather than computes) a policy via DDLPO assuming \textit{mean} parameters, and \textbf{Rand}, which acts randomly to fill the budget. All results are averaged over 50 random seeds and were executed on a cluster running CentOS with Intel(R) Xeon(R) CPU E5-2683 v4 @ 2.1 GHz with 8GB of RAM using Python 3.7.10. Our DDLPO implementation builds on OpenAI Spinning Up \citep{SpinningUp2018} and RR-DPO builds on the MIRROR implementation \citep{xu2021robust}, computing Nash equilibria using Nashpy 0.0.21 \citep{Knight2018}. Code is available at \url{https://github.com/killian-34/RobustRMAB} and hyperparameter settings are in Appendix \ref{sec:appendix:hyperparams-and-details}.

\subsection{Experimental Domains}

\textbf{Synthetic} demonstrates that reward-maximizing policies (RLvMid, HP, HM, HO)
may incur large regret in the presence of uncertainty. There are three binary-action arm types $\{U,V,W\}$, each with $\mathcal{C} = \{0, 1\}$, $\mathcal{S}=\{0,1\}$, $R(s)=s$, and the following transition matrix, with rows and columns corresponding to actions and next states, respectively:
\[T^n_{s=0}=
\begin{bmatrix}
    0.5  &  0.5 \\
    0.5  &  0.5
\end{bmatrix}, \hspace{2mm}
T^n_{s=1}=
\begin{bmatrix}
    1.0  &  0.0 \\
    1-p_n  &  p_n
\end{bmatrix}
\]
\[
p_U \in [0.00, 1.00],\hspace{0.5mm}
    p_V \in [0.05, 0.90],\hspace{0.5mm}
    p_W \in [0.10, 0.95]
\]
When an arm is at $s=0$, each action has equal impact on the state transition. When the arms are at $s=1$, selecting arms with high $p_n$ is optimal. This implies that policies can be specified by the order in which arms would be acted on, when they are in state $s=1$. Accordingly, $\pi_\textit{HP} = [W,V,U]$, $\pi_\textit{HM} = [W,U,V]$, and $\pi_\textit{HO} = [U,W,V]$. However, observe that there exist values of $p_n$ that can make each of the reward-maximizing policies incur large regret, e.g., for $\pi_\textit{HO}$ $p_U=0.0, p_V=0.9, p_W=0.1$ would induce an optimal policy $[V,W,U]$, which is the reverse of $\pi_\textit{HO}$. 

\textbf{ARMMAN} is a real-world \emph{maternal healthcare intervention problem} modeled as a binary-action RMAB \citep{biswas2021learn}. The goal is to select a subset of mothers each week to intervene on to encourage engagement with automated maternal health messaging. The behavior of enrolled women is modeled by an MDP with three states: Self-motivated, Persuadable, and Lost Cause. We use the summary statistics given in their paper and assume uncertainty intervals of $0.5$ centered around the transition parameters, resulting in 6 uncertain parameters per arm (details in Appendix \ref{sec:appendix:domains:armman}). Similar to the setup by \citet{biswas2021learn}, we assume 1:1:3 split of arms with high, medium, and low probability of increasing their engagement upon intervention. In our experiments, we scale the value of $N$ in multiples of $5$ to keep the same split of arm categories of 1:1:3.

\begin{figure*}[t]
    \centering
    \includegraphics[width=\textwidth]{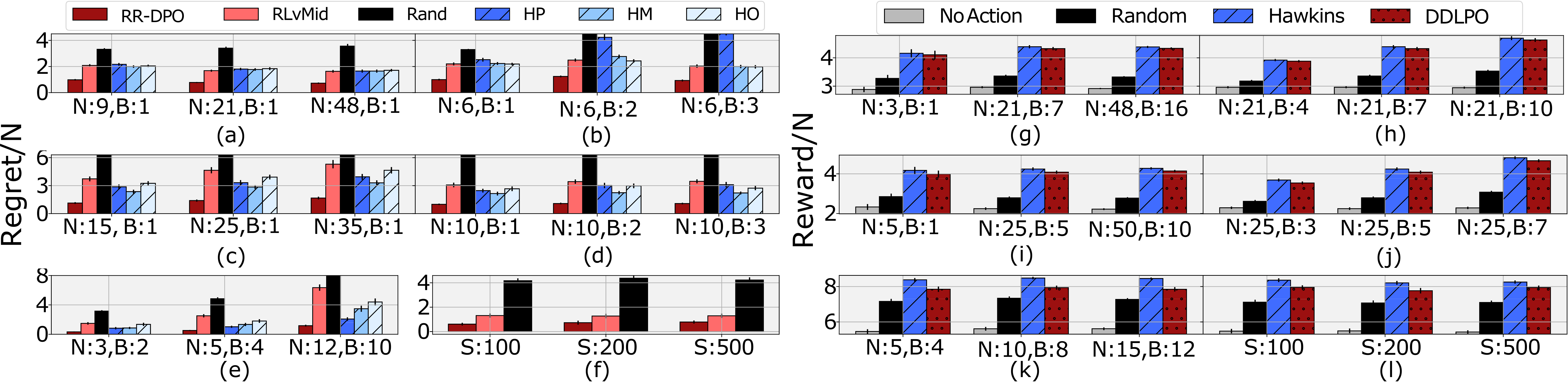}
    \caption{\textbf{(a--f)} Maximum policy regret of RR-DPO in robust setting for Synthetic (a,b), ARMMAN (c,d) and SIS (e,f) domains. Lower is better. Synthetic is scaled by 3 and ARMMAN by 5 to maintain the distributions of arm types specified in Sec.~\ref{sec:experiments}. (e)~uses $S=50$ and (f)~uses $N=5,B=4$. RR-DPO beats all baselines by a large margin across various settings. \textbf{(g--l)}~Returns of DDLPO for reward-maximizing setting (agent oracle) for synthetic (g,h), ARMMAN (i,j), and SIS (k,l) domains. Higher is better. (k)~uses $S=50$ and (l)~uses $N=5,B=4$. DDLPO is competitive across parameter settings.}
    \label{fig:all_experiments}
\end{figure*}

\begin{figure}[t]
    \centering
    \includegraphics[width=0.8\linewidth]{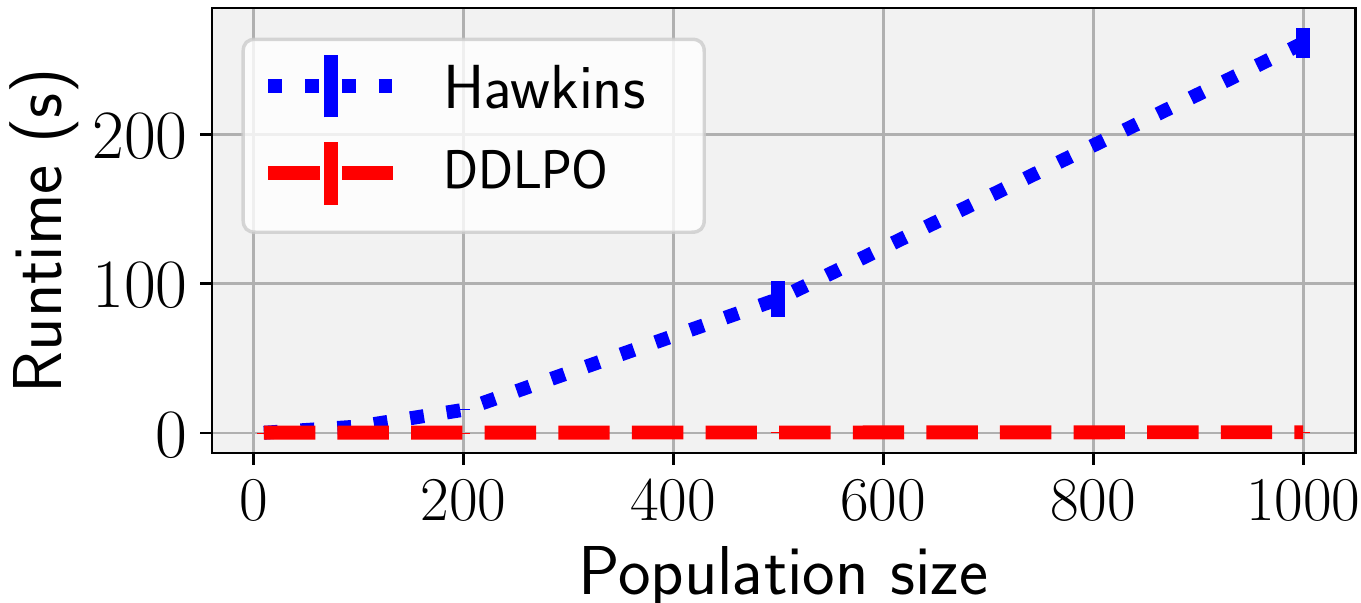}
 \caption{The poor scaling of query time of the Hawkins baseline compared to DDLPO, as discussed in Sec.~\ref{sec:experiments}, even for relatively small problem sizes ($N = 10, B = 2$).}
    \label{fig:hawkins_bad_runtime}
\end{figure}

\textbf{SIS Epidemic Model} is a discrete-state model in which arms represent distinct geographic regions and each member of an arm's population of size $N_{\textit{p}}$ is either (\textbf{S})usceptible to or (\textbf{I})nfected with an infectious disease. Such models have been the subject of increased interest following the COVID-19 pandemic \citep{hinch2021openabm,kerr2021covasim}, and will represent a large-state and multi-action experimental domain. In our model, the count of \textbf{S} members of the population is the state of each arm. Each arm's SIS model is defined by parameters $\kappa$, the average number of contacts per round, and $r_{\textit{infect}}$, the probability of infection given contact with an \textbf{I} member. Details on computing discrete state transition probabilities from these parameters are derived from \citet{yaesoubi2011generalized} and given in Appendix \ref{sec:appendix:domains:sis}. We introduce three intervention actions $\{a_0, a_1, a_2\}$ with costs $c=\{0, 1, 2\}$. Action $a_0$ represents no action, $a_1$ represents messaging about physical distancing (divides $\kappa$ by $a^{\textit{eff}}_1$), and $a_2$ represents distributing face masks (divides $r_{\textit{infect}}$ by $a^{\textit{eff}}_2$). We impose the following uncertainty intervals: $\kappa \in [1, 10]$, $r_{\textit{infect}} \in [0.5, 0.99]$, $a^{\textit{eff}}_{\{1,2\}} \in [1, 10]$.

\subsection{Performance of RR-DPO}
First, we evaluate the performance of the algorithms in uncertain environments. We compute the regret of an agent's pure strategy $\pi$ against a nature pure strategy $\omega$ as the difference in the average reward obtained by $\pi$ against $\omega$ and the average reward of the best strategy in the experiment against $\omega$. The average reward is the discounted sum of rewards over all arms for a horizon of length $10$, over $25$ simulations. In each setting, DO runs for $6$ epochs, using $100$ rollout steps and $100$ training epochs for each oracle. After completion, each baseline strategy is evaluated by querying the nature oracle for the best response against that strategy, then computing max regret against all $\omega$. The regret of RR-DPO is computed as the utility of the agent mixed strategy returned by the DO over the two-player regret game.

Fig.~\ref{fig:all_experiments}(a--f) shows RR-DPO incurs the lowest regret, beating the baselines in all domains. (a,b) shows results on the synthetic domain, demonstrating our approach can reduce regret by \char`\~$50\%$ against the benchmarks, across various values of $N$ and $B$. 
Moreover, as $B$ increases, the regret incurred may increase, since higher budget implies better reward potential for the optimal policy; however, the regret for RR-DPO remains small even as $B$ grows. 
Similarly, for the ARMMAN domain (c,d), a challenging domain adapted from a real-world problem, our algorithm performs consistently better than the baselines, achieving regret that is around $50\%$ lower than the best baselines. In the SIS domain (e--f), another real-world planning setting with a larger state space and multiple actions, our results are robust across parameter settings. Importantly, this holds even as we increase the state space from $S=100$ to $500$ (f), in which running the Hawkins baseline becomes prohibitively expensive. 

\emph{Finally, we run sensitivity analyses of the algorithms against $H$ and the size of the uncertainty sets} (Appendix Fig.~\ref{fig:um_and_h_sensitivity_analysis}). When $H$ varies from 10 to 100, RR-DPO maintains very low regret, while competitor regret as much as doubles, increasing RR-DPO's relative improvement as high as \char`\~60\%. Similar results are obtained when varying the uncertainty intervals between 0.25, 0.5 and 1.0 times their widths from the experiments in Fig.~\ref{fig:all_experiments}, with RR-DPO always dominating.

\subsection{Performance of DDLPO}
We also evaluate the performance of DDLPO, our novel DRL approach to find reward-maximizing policies for multi-action RMABs, which implements our agent oracle. We compare against \textbf{No Action} and \textbf{Random} baselines as well as the computationally intensive solution by Hawkins which computes the Lagrange policy, but which requires exact environment parameters and discrete states/actions. Hawkins upper bounds DDLPO for small discrete problems since it is exact whereas DDLPO learns the Lagrange policy from samples. Each experiment is a traditional reward-maximizing RMAB instantiated with a random sample of valid parameter settings for each seed.

Fig.~\ref{fig:all_experiments}(g--l) shows DDLPO achieves reward comparable to the Hawkins algorithm and significantly better than random, providing insight into the success of our RR-DPO approach which DDLPO enables, and showing promise for DDLPO as an algorithm of general interest. In the synthetic domain (g,h), DDLPO learns to act on the $33\%$ of arms who belong to category $W$. The mean reward of DDLPO almost matches that of Hawkins algorithm as $N$ scales with a commensurate budget (g). As we fix $N$ and vary the budget (h), the optimal policy accumulates more reward, and DDLPO almost equals the optimal. We observe similar results on the ARMMAN domain (i,j), a challenging real-world health problem. On the SIS domain (k,l), the strong performance of DDLPO holds in a multi-action setting even as we increase the number of states from 50 to 500 (l). 

Moreover, DDLPO beats Hawkins computationally: in Fig.~\ref{fig:hawkins_bad_runtime}, a single rollout ($10$ rounds) of Hawkins takes \char`\~$100$ seconds when there are $500$ states, scaling quadratically in general. This demonstrates that it would be prohibitive to run Hawkins in the loop of RR-DPO, since agent policies are evaluated thousands of times to compute the regret matrices. For just $25$ simulations, computation would take \char`\~$42$ minutes to evaluate a single cell in the regret matrix, which has $|\Pi| \times |\Omega|$ total cells. 



\section{Conclusion}
We address a key limitation blocking RMABs from many real-world settings: that arm dynamics are not known precisely. To plan safe, effective policies, robust approaches accounting for uncertainty are essential, which we give in RR-DPO, enabled by DDLPO, a novel deep-RL algorithm for RMABs of general interest. We hope our contributions bring us closer to deploying RMABs for real-world impact.

\begin{contributions} 
J.A.K.~conceived and implemented algorithmic ideas, wrote code, designed and ran experiments, wrote proofs, created figures, and led writing the paper. L.X.~contributed algorithmic ideas, wrote code, wrote proofs, and contributed to writing the paper. A.B.~contributed algorithmic ideas and contributed to writing the paper. M.T.~contributed guidance on the direction of the paper, contributed algorithmic ideas, and contributed to writing the paper.
\end{contributions}

\begin{acknowledgements}
This work was supported in part by the Army Research Office by Multidisciplinary University Research Initiative (MURI) grant number W911NF1810208. J.A.K. was supported by an NSF Graduate Research Fellowship under grant DGE1745303. A.B. was supported by the Harvard Center for Research on Computation and Society. Thank you to Andrew Perrault for feedback on an earlier draft.
\end{acknowledgements}

\nocite{NEURIPS2019_9015}
\nocite{gurobi}

\bibliography{killian_313}

\appendix
\clearpage

\renewcommand\thesection{\Alph{section}}
\renewcommand{\thefigure}{A\arabic{figure}}
\renewcommand{\thetable}{A\arabic{table}}
\renewcommand{\thealgorithm}{A\arabic{algorithm}}
\setcounter{figure}{0}
\setcounter{table}{0}
\setcounter{algorithm}{0}
\setcounter{proposition}{0}

\section{Proofs}
\label{sec:appendix:proofs}

\subsection{Proof of Proposition~\ref{thm:lambda_update}}
\begin{proposition}
To learn the value $\lambda$ that minimizes Eq.~\ref{eq:decoupled_value_func} given a state $\bm{s}$, the $\lambda$-network, parameterized by $\Lambda$, should be updated with the following gradient rule:
\begin{equation}
\begin{aligned}
    \Lambda_t = \Lambda_{t-1} - \alpha \left( \frac{B}{1-\beta} + \sum_{n=1}^{N}D_n(s_n, \lambda_{t-1}(\bm{s})) \right) 
\end{aligned}
\end{equation}
where $\alpha$ is the learning rate and $D_n(s_n, \lambda)$ is the negative of the expected $\beta$-discounted sum of action costs for arm $n$ starting at state $s_n$ under the optimal policy for arm $n$ for a given value of $\lambda$.
\end{proposition}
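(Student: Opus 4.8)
The plan is to recognize the right-hand side of Eq.~\ref{eq:decoupled_value_func} as the objective being minimized over the scalar $\lambda$, and then derive the stated rule as ordinary gradient descent on that objective. Writing $f(\lambda) = \frac{\lambda B}{1-\beta} + \sum_{n=1}^N \max_{j} Q_n(\bm{s}_n, a_{nj}, \lambda)$, the update $\Lambda_t = \Lambda_{t-1} - \alpha \nabla_\lambda f$ reproduces the claimed rule as soon as I show $\nabla_\lambda f(\lambda) = \frac{B}{1-\beta} + \sum_{n=1}^N D_n(s_n, \lambda)$. The first term contributes $\frac{B}{1-\beta}$ immediately, so the real work lies in differentiating the second.

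First I would handle the inner maximization. Since $\max_j Q_n(\bm{s}_n, a_{nj}, \lambda)$ is a pointwise maximum of functions of $\lambda$, I invoke the envelope (Danskin) theorem: away from ties, $\frac{d}{d\lambda}\max_j Q_n = \frac{\partial}{\partial \lambda} Q_n(\bm{s}_n, a_{nj^\star}, \lambda)$ where $j^\star$ is the maximizing action, and at ties any subgradient suffices for the gradient-descent argument. Next I would compute $\frac{\partial}{\partial \lambda} V_n^\lambda(s_n)$, where $V_n^\lambda(s_n) := \max_j Q_n(s_n, a_{nj}, \lambda)$ is the optimal Lagrangian value. Unrolling the Bellman recursion of Eq.~\ref{eq:arm_value_func_lagrange} along the optimal policy $\pi^{La}_\omega(\lambda)$ expresses $V_n^\lambda(s_n)$ as $\mathbb{E}[\sum_{k\ge 0}\beta^k R(s_n^k)] - \lambda\,\mathbb{E}[\sum_{k\ge 0}\beta^k c_n^k]$, which is affine in $\lambda$ once the policy, and hence the induced state distribution, is held fixed. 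Differentiating this fixed-policy expression gives $-\mathbb{E}[\sum_{k\ge 0}\beta^k c_n^k] = D_n(s_n,\lambda)$ by definition of $D_n$. Because the policy is optimal for the current $\lambda$, a second envelope argument lets me discard the first-order contribution of the policy's own dependence on $\lambda$, so the fixed-policy derivative equals the total derivative. Summing over arms and adding $\frac{B}{1-\beta}$ yields the gradient, and hence the update.

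I expect the main obstacle to be rigorously justifying the two envelope invocations together: the max-over-actions, and---more delicately---the dependence of $\pi^{La}_\omega(\lambda)$ on $\lambda$ inside the recursive expectation. The cleanest route is to freeze the optimal policy $\pi^{La}_\omega(\lambda_0)$ at the point of differentiation, differentiate the resulting affine-in-$\lambda$ value to obtain $D_n$, and argue that optimality makes this policy a stationary point, so that the derivative of $V_n^\lambda$ coincides with the fixed-policy derivative. I would also note that the same structure shows $f$ is convex and piecewise-linear in $\lambda$, being a finite sum of maxima of affine pieces arising from the finite policy set, which makes the gradient-descent step well-posed and links directly to the convergence statement of Prop.~\ref{thm:lambda_convergence}.
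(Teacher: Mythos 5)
Your proposal is correct and follows essentially the same route as the paper's proof: differentiate Eq.~\ref{eq:decoupled_value_func} in $\lambda$, read off $\frac{B}{1-\beta}$ from the first term, and identify the derivative of each arm's optimal Lagrangian value with the negative expected discounted cost $D_n$. The only difference is that you make explicit the two envelope/Danskin steps (the max over actions and the $\lambda$-dependence of $\pi^{La}_{\omega}(\lambda)$) that the paper's proof treats implicitly when it asserts that ``the only terms which are a function of $\lambda$ are the costs,'' so your version is a more careful rendering of the same argument.
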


\begin{proof}
The gradient update rule is derived by taking the gradient of Eq.~\ref{eq:decoupled_value_func} with respect to $\lambda$, which has two main terms, $\lambda B / (1-\beta)$, and the sum over $Q_n$, the Q-functions with respect to $\lambda$. Looking more closely at $Q_n$, the only terms which are a function of $\lambda$ are the costs of actions taken by the policy that $Q_n$ implies, i.e., terms $-\lambda c_j$. Thus, the gradient of $Q_n$ is the negative expected discounted sum of costs taken by the optimal policy at the given value of $\lambda$, i.e., $\frac{dQ_n}{d\lambda} = -\mathbb{E}[\sum_{t=0}^{H} \beta^t c_{n,t}]$, where $c_{n,t}$ is the cost of the action taken on arm $n$ in round $t$.
\end{proof}

\subsection{Proof of Proposition~\ref{thm:lambda_convergence}}
\begin{proposition}
Given arm policies corresponding to optimal $Q$-functions, 
Prop.~\ref{thm:lambda_update} will lead $\Lambda$ to converge to the optimal as the number of training epochs and $K\xrightarrow[]{}\infty$.
\end{proposition}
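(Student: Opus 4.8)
The plan is to recognize the update rule of Prop.~\ref{thm:lambda_update} as a (stochastic) subgradient descent step on the \emph{convex} scalar objective that Eq.~\ref{eq:decoupled_value_func} minimizes over $\lambda$, and then to invoke the standard convergence guarantee for subgradient descent. Fix a state $\bm{s}$ and write
\[
f(\lambda) \;=\; \frac{\lambda B}{1-\beta} \;+\; \sum_{n=1}^{N} V_n(s_n,\lambda), \qquad V_n(s_n,\lambda) = \max_{j}\, Q_n(s_n,a_{nj},\lambda),
\]
so that $\lambda^\star(\bm{s})$ is the minimizer of $f$. First I would establish convexity of $f$ in $\lambda$. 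For any \emph{fixed} arm policy $\pi$, the value $V_n^{\pi}(s_n,\lambda)$ depends on $\lambda$ only through the $-\lambda c_j$ cost terms of Eq.~\ref{eq:arm_value_func_lagrange}, so it is affine in $\lambda$ with slope equal to the negative expected $\beta$-discounted action cost of $\pi$. The optimal arm value is the pointwise supremum over policies, $V_n(s_n,\lambda) = \sup_{\pi} V_n^{\pi}(s_n,\lambda)$, i.e.\ a supremum of affine functions, hence convex and piecewise linear. Adding the affine term $\lambda B/(1-\beta)$ and summing over the $N$ arms preserves convexity, so $f$ is convex.

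Next I would identify the bracketed quantity in the update with a subgradient of $f$. By Danskin's theorem, at any $\lambda$ a subgradient of $V_n$ is the slope of the affine piece attained by the optimal policy there, namely the negative expected discounted cost $D_n(s_n,\lambda)$ of the optimal arm policy — exactly the quantity appearing in Prop.~\ref{thm:lambda_update}, and this is precisely where the hypothesis that the arm policies correspond to optimal $Q$-functions is used. Hence $g(\lambda) = \frac{B}{1-\beta} + \sum_{n} D_n(s_n,\lambda) \in \partial f(\lambda)$, and the rule $\Lambda_t = \Lambda_{t-1} - \alpha\, g(\lambda_{t-1})$ is a subgradient descent step on $f$. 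I would then argue consistency of the Monte Carlo estimate used in practice, $D_n(s_n,\lambda) \approx -\sum_{k=0}^{K-1}\beta^k c_n^k$: as $K\to\infty$ the truncation error is $O(\beta^K)\to 0$ since $\beta<1$, and averaging over rollouts the sample mean converges to the true expectation by the law of large numbers; bounded per-step costs give uniformly bounded subgradients with bounded variance, so the stochastic subgradients become exact in the limit.

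Finally I would invoke the classical result that subgradient descent on a convex function with bounded subgradients and a suitably chosen step size $\alpha$ converges to the minimizer; combined with the estimates becoming exact as $K\to\infty$ and the number of training epochs $\to\infty$, $\Lambda(\bm{s})$ converges to $\lambda^\star(\bm{s})$, i.e.\ $\Lambda\to\Lambda^\star$. The main obstacle I expect is the non-smoothness of $f$ at the kinks where the optimal arm policy switches: there $f$ is not differentiable, so the argument must be phrased in terms of subgradients (justified via Danskin's theorem) rather than ordinary gradients, and one must verify that the sampled discounted cost vector indeed lies in $\partial f$. A secondary subtlety is connecting the per-state scalar optimization to the shared network parameter $\Lambda$, which I would handle by assuming the $\lambda$-network has sufficient capacity to represent the mapping $\bm{s}\mapsto\lambda^\star(\bm{s})$ (equivalently, treating the tabular case pointwise in $\bm{s}$).
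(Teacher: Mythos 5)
Your proposal is correct and follows essentially the same route as the paper's proof: convexity of the Lagrangian objective in $\lambda$ (as a supremum of affine functions of $\lambda$), accuracy of the gradient estimate $D_n$ under the optimal-$Q$ assumption, and standard convergence of (sub)gradient descent on a convex function, with the per-state issue handled by requiring all states to be updated in the limit. You are in fact somewhat more careful than the paper---invoking Danskin's theorem and treating the kinks via subgradients rather than gradients, and bounding the truncation error in the $K$-step Monte Carlo estimate---but the underlying argument is the same; the only ingredient the paper stresses that you leave implicit is that unbiasedness of the cost samples requires \emph{not} imposing the budget constraint at training time.
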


\begin{proof}
Eq.~\ref{eq:decoupled_value_func} is convex in $\lambda$, which follows from definition of $Q_n$, i.e., the max over piece-wise linear functions of $\lambda$ is also a convex function in $\lambda$.  Thus the learning task of $\Lambda$ is also convex. Therefore, all that is required for asymptotic convergence of $\Lambda$ is that (1) the gradients we estimate via Prop.~\ref{thm:lambda_update} are accurate, and that (2) all inputs, i.e., all states $\bm{s}$, are seen infinitely often in the limit. (1) is achieved by the assumption that optimal $Q$-functions are given, an analytic condition that is achieved in practice by allowing the arm-networks to train for a reasonable number of rounds under a given output of the $\lambda$-network, before updating $\Lambda$. Specifically, given optimal Q-functions and their corresponding optimal policies, the sampled sums of spent budget from those optimal policies represent an unbiased estimator of each $D_n$. Note, though that to be an \textit{unbiased} estimator, this relies on not imposing the budget constraint \textit{at training time}, a procedure we carry out in practice.\footnote{It is critical to note that at test time, \textit{we always impose the budget constraint} -- i.e., all of our methods solve the original constrained RMAB problem -- they only use the Lagrangian relaxation as a tool to find good policies to the original constrained problem.} Thus (1) is achieved. (2) is achieved by following a training procedure that uniformly randomly samples start states $\bm{s}$ for each round of training until convergence. Thus the proposition is established.
\end{proof}

\subsection{Proof of Proposition~\ref{thm:rrdpo_convergence}}
\begin{proposition}
RR-DPO converges in a finite number of steps to the minimax regret-optimal policy.
\end{proposition}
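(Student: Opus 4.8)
The plan is to recognize RR-DPO as an instance of the classical double oracle (DO) algorithm \citep{mcmahan2003planning} applied to a finite zero-sum \emph{regret game}, and then invoke the standard DO convergence guarantee \citep{gilbert2017double,xu2021robust}. First I would formalize the game: the agent's pure strategies are RMAB policies $\pi$ and nature's pure strategies are parameter settings $\omega$, with payoff given by the regret $L(\pi,\omega)$ of Eq.~\ref{eq:regret}, where the agent minimizes and nature maximizes, exactly matching the objective of Eq.~\ref{eq:minimax}. I would note this payoff is well defined because, for each fixed $\omega$, the induced RMAB is a finite MDP possessing an optimal policy $\pi^\star_\omega$, so $L(\pi,\omega)=G(\pi^\star_\omega,\omega)-G(\pi,\omega)$ is a finite real number. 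Under the stated hypotheses---binary actions and finite pure strategy sets (nature's set made finite by discretization)---both players have finitely many pure strategies, so the full game is a finite matrix game and, by von Neumann's minimax theorem, has a value attained at a mixed Nash equilibrium.

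Next I would track the restricted subgames generated by the algorithm. At epoch $e$, RR-DPO solves for a mixed Nash equilibrium $(\tilde\pi_e,\tilde\omega_e)$ of the subgame on the current strategy sets $(\Pi_{e-1},\Omega_{e-1})$, then each oracle adds a best response: the agent oracle (DDLPO) returns $\pi_e$ best-responding to $\tilde\omega_e$, and the nature oracle (MA-DDLPO) returns $\omega_e$ best-responding to $\tilde\pi_e$. By the assumption that each oracle returns a \emph{true} best response, these additions are genuine best responses against the opponent's current mixed strategy, and the sets $\Pi_e,\Omega_e$ grow monotonically.

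The core of the argument is the standard DO termination-and-optimality certificate. Because the full pure strategy sets are finite and $\Pi_e,\Omega_e$ grow monotonically, a new pure strategy can be added only finitely many times; hence after finitely many epochs both oracles return best responses already contained in the current sets, which is exactly the condition detected by the stopping test of Alg.~\ref{alg:full-alg}. At that point the restricted-game equilibrium value and the two oracle best-response values coincide, and by the standard double-oracle bounds this certifies that $(\tilde\pi_e,\tilde\omega_e)$ is a Nash equilibrium of the \emph{full} game \citep{mcmahan2003planning,gilbert2017double}. Since the game is zero-sum, this equilibrium value equals $\min_\pi\max_\omega L(\pi,\omega)$, so the returned agent mixed strategy $\tilde\pi_e$ achieves the minimax regret of Eq.~\ref{eq:minimax} and is therefore minimax-regret-optimal.

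The main obstacle I expect is the optimality certificate itself: rigorously showing that \emph{no improving best response} implies a global (not merely restricted) equilibrium. This is precisely where the true-best-response assumption is indispensable---if an oracle returned only an approximate best response, the bracketing bounds would fail to meet and the certificate would break. I would therefore state the bracketing inequalities explicitly (the restricted value bounds the full value from one side, while each oracle's best-response value bounds it from the other) and show that they collapse to equality exactly when both best responses lie in the current strategy sets, an event guaranteed to occur in finitely many steps by finiteness of the pure strategy spaces.
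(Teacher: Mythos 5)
Your game-theoretic core is correct and matches the paper's strategy: both arguments reduce RR-DPO to the classical double-oracle exhaustion argument over a finite zero-sum regret game, relying on finiteness of both pure strategy sets (nature's via discretization) and on true best responses. Your treatment of that part is in fact more explicit than the paper's --- you spell out the bracketing inequalities and the termination certificate, which the paper only gestures at by citing the ``common strategy'' of exhausting the pure strategy sets.

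However, you are missing the piece that the paper treats as the substantive content of the proof: justifying \emph{why} the agent oracle can be taken to return a true best response at all. You treat ``each oracle returns a true best response'' as a black-box hypothesis, but the agent oracle does not directly optimize the constrained RMAB objective --- DDLPO trains networks against the Lagrangian relaxation (Eq.~\ref{eq:decoupled_value_func}), and the budget constraint is only reimposed afterward by the action-selection step of Alg.~\ref{alg:ddlpo-act}. Even granting that the arm-networks and $\lambda$-network converge to the optimal $Q$-functions of the relaxed problem, it does not follow automatically that the induced constrained policy is $\pi^\star_\omega$. The paper closes this gap by restricting to the binary-action setting and the $\alpha=$ `Whittle' selection rule: the binary search over $\lambda$ recovers exactly the Whittle index policy, which is asymptotically optimal for binary-action RMABs \citep{weber1990index}. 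This is also why the proposition is stated only for the binary-action case --- a restriction your argument neither uses nor explains. To complete your proof you would need either to add this Lagrangian-to-constrained-policy step, or to state explicitly that the best-response assumption is being made at the level of the original constrained problem rather than the relaxation (in which case you should acknowledge that this is a stronger assumption than ``the networks converge'').
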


\begin{proof}
A common strategy for establishing optimal convergence of the double oracle is to show that the pure strategy sets of both players can be exhausted. We can achieve this in our setting under the conditions (1) that each player has a finite strategy set, i.e., is possible to be exhausted and (2) that each oracle gives an optimal best response. Since the agent pure strategy set is already finite, we can achieve (1) by discretizing the nature oracle---in effect by rounding the outputs of the policy network. For (2), for analytical purposes, we make the common assumption that our oracles internally converge to their optimal values, i.e., in our case, the arm-networks and $\lambda$-network converge optimally. However, since our networks learn the Lagrange-relaxed version of the problem, some additional tools are needed. Speficially, we must identify conditions in which DDLPO-Act gives policies which approach $\pi^*_\omega$.
This can be achieved in the binary-action setting with $\alpha=$ `Whittle', which uses a binary search procedure to identify a value of $\lambda$ such that exactly $B$ arms have $Q_n(a=1,\lambda) > Q_n(a=0,\lambda)$, then acting on those arms. This procedure is equivalent to the Whittle index policy, which is asymptotically optimal for binary-action RMABs \citep{weber1990index}.
\end{proof}

\subsection{Proof of Proposition~\ref{thm:regret}}
\begin{proposition}
In the Robust RMAB problem with interval uncertainty, the max regret of a reward-maximizing policy can be arbitrarily large compared to a minimax regret-optimal policy.
\end{proposition}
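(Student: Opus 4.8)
The plan is to prove the separation by an explicit family of Robust RMAB instances, parameterized by a scale (either the discount factor $\beta \to 1$, a reward magnitude $M$, or an interval-asymmetry parameter), together with a reduction of each instance to a small zero-sum regret game that can be solved by hand. The conceptual source of the gap is that a reward-maximizing policy must commit to a single \emph{pure} ordering derived from one point estimate of the uncertain parameters, whereas the minimax regret--optimal strategy may \emph{hedge} (mix), and nature, seeing the committed ordering, can set the true parameters to expose it. I would build on the synthetic domain already described: two arms competing for a unit budget ($B=1$), binary actions, $R(s)=s$, and the transition structure in which acting retains an arm in the rewarding state $s=1$ with probability $p_n$ while the passive action drops it, with interval uncertainty $p_A\in\underline{\overline{\omega}}_A$, $p_B\in\underline{\overline{\omega}}_B$ chosen asymmetrically.

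First I would reduce the agent's relevant policy space: since at $s=0$ all actions are equivalent and at $s=1$ retaining a high-$p$ arm is optimal, every sensible pure policy is a priority ordering over the arms in state $s=1$, so for two arms there are only two pure strategies, ``prioritize $A$'' and ``prioritize $B$.'' Likewise nature's worst-case choices reduce to the extreme configurations $(p_A\!\downarrow,p_B\!\uparrow)$ and $(p_A\!\uparrow,p_B\!\downarrow)$. Computing the expected discounted return $G(\pi,\omega)$ for each ordering against each configuration and subtracting the per-$\omega$ optimum $G(\pi^\star_\omega,\omega)$ yields a regret matrix of the form
\begin{equation}
\begin{pmatrix} a & 0 \\ 0 & b \end{pmatrix},
\end{equation}
where $a$ is the loss from prioritizing $A$ when $B$ was the good arm and $b$ is the symmetric loss for $B$. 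I would then argue that any reward-maximizing policy (pessimistic, mean, or optimistic) is one of these pure rows, so its maximum regret is the corresponding off-diagonal exposure, whereas solving the $2\times 2$ game shows the minimax regret--optimal mixed strategy attains value $ab/(a+b)$. Finally I would amplify: taking $\beta\to 1$ (or scaling rewards) makes the discounted value of a retained arm, hence the larger exposure, diverge, while tuning the intervals so that $a/b\to\infty$ drives the ratio $a/(ab/(a+b)) = 1 + a/b$ unbounded; this establishes that reward-max regret is arbitrarily large both in absolute terms and relative to the minimax--optimal value.

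The main obstacle will be verifying that the reduction to the small game is faithful, which requires the return computations to be done carefully rather than waved away. Concretely I must confirm three things: that the two listed orderings dominate all other (including randomized-within-a-round) agent behaviors so the agent's effective strategy set really is these two rows; that the two extreme parameter configurations are genuinely nature's best responses, so no interior $\omega$ yields higher regret; and, crucially, that the chosen point estimate (e.g.\ the interval mean) makes the reward-maximizing policy commit to the \emph{high-exposure} ordering (the row with entry $a$) rather than escaping into the safe one. Establishing the monotone comparison of discounted returns needed for these dominance claims, and ensuring the asymmetric interval widths simultaneously produce $a\gg b$ and point the reward-max policy at $a$, is the delicate step; once it is in place, the scaling argument and the exhibited hedging strategy (which upper-bounds the true minimax regret, since the optimum is at most any fixed strategy's worst-case regret) complete the proof.
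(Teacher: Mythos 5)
Your construction is sound and rests on exactly the same mechanism as the paper's proof: two arms competing for a unit budget, nature able to flip which arm is the good one, a reward-maximizing policy forced to commit to one arm by its point estimate, a hedging (mixed) strategy that halves the worst case, and a reward scale $R\to\infty$ to amplify the gap. The differences are in instantiation rather than idea. The paper uses a horizon-$1$ instance in which both arms start in the bad state and the active action succeeds with probability $p_n$; because returns are then \emph{linear} in $(p_A,p_B)$, the regret $G(\pi^\star_\omega,\omega)-G(\tilde\pi,\omega)$ is a maximum of linear functions minus a linear function, hence convex in $\omega$ and maximized at a vertex of the box --- so the two delicate obligations you correctly flag (that nature's best response sits at the interval extremes, and that priority orderings dominate all other behaviors) are discharged in one line. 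Your multi-step discounted version makes returns polynomial in $p_n$, so those same claims genuinely require the ``careful return computations'' you anticipate; they are surmountable but buy you nothing for this proposition. Your third flagged concern --- forcing the reward-maximizing policy onto the high-exposure row --- is the one the paper spends most of its effort on: under a symmetric point estimate ($p_A=p_B$) the reward-maximizer may itself mix, and the paper handles this both by bounding the regret of any such mix (nature punishes whichever arm gets weight $\ge 1/2$) and by perturbing the rewards ($R$ vs.\ $R-1$) to break the tie; your plan of using asymmetric intervals accomplishes the same thing. One thing your route adds that the paper does not: the explicit $2\times 2$ game value $ab/(a+b)$ gives you an unbounded \emph{ratio} via $a/b\to\infty$ in addition to the unbounded additive gap, whereas the paper only argues the additive gap $R/2\to\infty$. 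If you execute this, I would still recommend collapsing to $H=1$ to make the dominance and extremality claims trivial.
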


\begin{proof}
Consider a binary-action RMAB problem with two arms A and~B. Let the reward from each arm be $R$ when the arm is in a \textit{good} state and $0$ in a \textit{bad} state. Our problem is to plan the best action with a budget of $1$ and horizon of $1$. Supposing the initial state is \textit{bad} for each arm, the transition probabilities for the transition matrix for each arm~$n$ is 
\begin{footnotesize}
$\begin{bmatrix} 1 & 0 \\ 1 - p_n & p_n \end{bmatrix}$\end{footnotesize}
where the uncertain variable~$p_n$ is constrained to be within $p_A, p_B \in [0, 1]$. Each value in the matrix corresponds to the probability of an arm at state \textit{bad} transitioning to \textit{bad} (column 1) or \textit{good} (column 2) if we take the \textit{passive} (row 1) or \textit{active} action (row 2).

To compute a reward-maximizing policy that does not consider robustness to uncertainty, we must optimize for one instantiation of the uncertainty set, which requires making one of three assumptions.
\begin{itemize}
    \item \textit{Case~1:} If we assume $p_A = p_B$, then an optimal policy is to act with probability $a_A$ on arm A and $a_B$ on arm B as long as $a_A + a_B = 1$. W.l.o.g., suppose $a_A \geq a_B$; then nature would set $p_A = 0$ and $p_B = 1$, imposing regret at least $R/2$. 
    \item \textit{Case~2:} If $p_A > p_B$, then the optimal policy would be to always act on arm A with probability $a_A = 1$ and never act on B ($a_B = 0$). Nature would then set $p_A = 0$ and $p_B = 1$ to impose regret $R$. 
    \item \textit{Case~3:} If $p_A < p_B$, the case is symmetric to Case~2 and result in regret $R$. Clearly, max regret is minimized when our action is such that $a_A + a_B = 1$; in this setting, we learn this optimal policy only under Case~1. Following Case 2 or~3, the difference between our regret and the minimax regret is $R/2$, which grows arbitrarily higher as $R \to \infty$.
\end{itemize}

A slight modification to this problem renders Case~1 non-optimal. Let the reward be $R$ when arm A is in a \textit{good} state and $R-1$ for arm B, so the optimal policy learned under the assumption from Case~1 leads to $a_A = 1$ and $a_B = 0$. Then nature could respond with $p_A = 0$ and $p_B = 1$, yielding reward $0$ and regret $R-1$, while the minimax regret--optimal policy achieves a minimum reward of $(R-1)/2$ (by playing $a_A = 0.5$ and $a_B = 0.5$ where nature responds with $p_A = 0$ and $p_B = 1$). Thus, the gap again can grow arbitrarily high as $R \to \infty$ provided that $R > 1$. We therefore have that in all cases, any reward-maximizing policy can achieve arbitrarily bad performance in terms of regret.
\end{proof}

\section{DDLPO-Act subroutines }
\label{sec:appendix:knapsack}
Here we provide the integer program which implements \texttt{QKnapsack}, one of the action-selection procedures used in Alg.~\ref{alg:ddlpo-act} to take actions at test time. \texttt{QKnapsack} takes $\lambda$ and $Q_n(s,a,\lambda)$ from the learned $\lambda$-network and arm networks, respectively, and returns the combination of actions that maximizes the sum of Q-values over all arms, subject to the costs of each action $\mathcal{C}$ and the budget constraint $B$.

\begin{align}
    &\max_{X} \sum_{n=1}^{N}\sum_{j=1}^{|\mathcal{A}|}x_{nj}Q_n(s_n, a_{nj}, \lambda) \\
    &\text{s.t. }\sum_{i=n}^{N}\sum_{j=1}^{|\mathcal{A}|} x_{nj}c_j \le B \\
    &\sum_{j=1}^{|\mathcal{A}|} x_{nj} = 1 \hspace{3mm} \forall n \in 1...N \\
    &x_{nj} \in \{0,1\}
    \label{eq:knapsack}
\end{align}

In Alg.~\ref{alg:bina-search}, we give the procedure $\texttt{BinaSearch}$ which implements a binary search over the learned $Q(\lambda)$-values to find a charge $\lambda$ for which exactly $B$ arms prefer to act rather than not act. This mimics the Whittle index policy in binary-action settings.

\begin{algorithm}[t]
\caption{BinaSearch (for the Whittle Index Policy)}
\label{alg:bina-search}
\textbf{Input}: State $\bm{s}$, arm critic networks $\phi_1,\ldots,\phi_N$, budget $B$, tolerance $\epsilon$.
\begin{algorithmic}[1] 
\STATE $q_{nj} = \phi_n(s_n,a_{nj},\lambda=0)$ \hspace{1mm} $\forall n \in [N], \forall j \in [|\mathcal{A}|]$
\STATE $lb = 0$
\STATE $ub = \max_{n\in [N], j \in [|\mathcal{A}|]}{\{q_{nj}}\}$
\color{black}\WHILE{$ub - lb > \epsilon$}
\STATE $\lambda = \frac{ub+lb}{2}$
\STATE $q_{nj} = \phi_n(s_n,a_{nj},\lambda)$ \hspace{1mm} $\forall n \in [N], \forall j \in [|\mathcal{A}|]$
\IF{fewer than $B$ arms have $q_{n,j=1} > q_{n,j=0}$}
\STATE $ub=\lambda$ \COMMENT{Charging too much, decrease}
\ELSIF{more than $B$ arms have $q_{n,j=1} > q_{n,j=0}$}
\STATE $lb=\lambda$ \COMMENT{Can charge more, increase}
\ELSIF{exactly $B$ arms have $q_{n,j=1} > q_{n,j=0}$}
\STATE break
\ENDIF
\ENDWHILE
\STATE $\bm{a} = \bm{0}$
\STATE $a_n = 1$ where $q_{n,j=1} > q_{n,j=0}$
\IF{$ub - lb \le \epsilon$}
\STATE break ties randomly s.t. $||\bm{a}||_1=B$
\ENDIF
\STATE \textbf{return} $\bm{a}$
\end{algorithmic}
\end{algorithm}

\section{Experimental Domain Details}
\label{sec:appendix:domains}
\subsection{ARMMAN}
\label{sec:appendix:domains:armman}
The MDPs in the ARMMAN domain \citep{biswas2021learn} have three ordered states representing the level of engagement of the beneficiaries in the previous week. Rewards are better for lower states, i.e., $R(0)=1, R(1)=0.5, R(2)=0$. At each step, the beneficiary may only change by one level, e.g., low-to-medium or high-to-medium but not low-to-high. They also assume that beneficiaries follow one of three typical patterns, A, B, and C, resulting in three MDPs with different transition probabilities. 
There are two patterns of effects present that differentiate the beneficiary types. (1)~For each of the above types, the planner can only make a difference when the patient is in state 1. Type A responds very positively to interventions, but regresses to low reward states in absence. Type B has a similar but less amplified effect, and type C is likely to stay in state 1, but can be prevented from regressing to state 2 when an action is taken. (2) Further, types A and C have only a 10\% chance of staying in the high reward state, while type B has a 90\% chance of staying there.

We converted these patient types to robust versions where the transition probabilities are uncertain as follows:

\[T^i_{s=0}=
\begin{bmatrix}
    p^i_{000} & 1 - p^i_{000} & 0.0 \\
    p^i_{010} & 1 - p^i_{010}  & 0.0
\end{bmatrix},
\]
\[T^i_{s=1}=
\begin{bmatrix}
    0.0 & 1 - p^i_{102} & p^i_{102} \\
    p^i_{110} & 1 - p^i_{110} & 0.0
\end{bmatrix}, \
\]
\[T^i_{s=2}=
\begin{bmatrix}
    0.0 & 1 - p^i_{202} & p^i_{202} \\
    0.0 & 1 - p^i_{212} & p^i_{212}
\end{bmatrix},
\]
where $i$ indexes the type (i.e., A, B or C).
We then set each $p^i_{sas^\prime}$ to be in a range of width 0.5 centered on the entries from each of the A, B, C beneficiary types for $s\in\{1,2\}$. To add additional heterogeneity to the experiments, for $s=0$, we set the range to 1.0 so that any beneficiary type can be made to have some non-negligible chance of staying in the good state, rather than only type B beneficiaries. The full set of parameter ranges are given in the Table~\ref{table:armman_robust_params} below.

\begin{table}[ht]
    \centering
    \begin{tabular}{|c|c|c||c|c||c|c|}
    \toprule
       Param &  L &  U & L &  U & L &  U \\
    \midrule
          Type A &        &        &  Type B &          &  Type C &          \\
    \midrule
     $p^i_{000}$ &   .00 &   1 &    .00 &     1 &    .00 &     1 \\
     $p^i_{010}$ &   .00 &   1 &    .00 &     1 &    .00 &     1 \\
     $p^i_{102}$ &   .50 &   1 &    .35 &     .85 &    .35 &     .85 \\
     $p^i_{110}$ &   .50 &   1 &    .15 &     .65 &    .00 &     .50 \\
     $p^i_{202}$ &   .35 &   .85 &    .35 &     .85 &    .35 &     .85 \\
     $p^i_{212}$ &   .35 &   .85 &    .35 &     .85 &    .35 &     .85 \\
    \bottomrule
    \end{tabular}
    \caption{Upper (U) and lower (L) parameter ranges for the robust ARMMAN environment.}
    \label{table:armman_robust_params}
\end{table}

In all experiments, 20\% of arms were sampled from type A, 20\% from type B and 60\% for type C. 
To add additional heterogeneity, for each of the 50 random seeds we uniformly sample a sub-range contained within the ranges given in Table~\ref{table:armman_robust_params}. In the agent oracle experiments, for each of the 50 random seeds, since these require fully instantiated transition matrices, we uniformly sample each parameter value for each arm according to its type such that the values are contained in the ranges given in Table~\ref{table:armman_robust_params}.

\subsection{SIS  Epidemic Model}
\label{sec:appendix:domains:sis}
In this domain, each arm follows its own compartmental SIS epidemic model. Each arm's SIS model tracks whether each of $N_p$ members of a population is in a susceptible (S) or infectious (I) state. This can be tracked with $N_p$ states, since it can be computed how many people are in state I if only the number of people in state S and the population size $N_p$ is known.

To define a discrete SIS model, we instantiate the model given in \citet{yaesoubi2011generalized} section 4.1 with a $\Delta t$ of 1. We also augment the model to include action effects and rewards. Specifically, $R(N_S) = N_S/N_p$, where $N_S$ is the number of susceptible (non-infected) people. Further, there are three actions $\{a_0, a_1, a_2\}$ with costs $c=\{0, 1, 2\}$. Action $a_0$ represents no action, $a_1$ divides the contacts per day $\kappa$ ($\lambda$ in \citet{yaesoubi2011generalized}) by $a^{\textit{eff}}_1$, and $a_2$ divides the infectiousness $r_{\textit{infect}}$ ($r(t)$ in \citet{yaesoubi2011generalized}) by $a^{\textit{eff}}_2$. That is, taking action $a_1$ will \textit{reduce} the average number of contacts per day in a given arm, and taking action $a_2$ will reduce the probability of infection given contact in a given arm, thus reducing the expected number of people that will become infected in the next round. However, to make this a robust problem, the relative effect sizes of each action for each arm will not be known to the planner, nor will the $\kappa$ or $r_{\textit{infect}}$. We impose the following uncertainty intervals for all arms: $\kappa \in [1, 10]$, $r_{\textit{infect}} \in [0.5, 0.99]$, $a^{\textit{eff}}_1 \in [1, 10]$, and $a^{\textit{eff}}_2 \in [1, 10]$. 

In the robust double oracle experiments, to add additional heterogeneity, for each of the 50 random seeds we uniformly sample a sub-range contained within the ranges given above for each arm. In the agent oracle experiments, for each of the 50 random seeds, since these require fully instantiated transition matrices, we uniformly sample each parameter value for each arm such that the values are contained in the ranges given above.

\section{Hyperparameter Settings and Implementation Details}
\label{sec:appendix:hyperparams-and-details}
\textbf{Neural networks: }All neural networks in experiments are implemented using PyTorch 1.3.1 \citep{NEURIPS2019_9015} with 2 fully connected layers each with 16 units and tanh activation functions, and a final layer of appropriate size for the relevant output dimension with an identity activation function. The output of discrete actor networks (i.e., the policy network from the agent oracle, and the policy network of agent A in the nature oracle) pass through a categorical distribution from which actions are randomly sampled at training time, without a budget imposed. It is critical not to impose the budget at training time, so that the budget spent by the optimal policy under a given $\lambda$ will result in a meaningful gradient for updating the $\lambda$-network. The output of continuous actor networks (i.e., agent B in the nature oracle which selects environment parameter settings) instead are passed as the means of Gaussian distributions -- with the log standard deviations learned as individual parameters separate from the network -- from which continuous actions are sampled at training time. At test time, actions are sampled from both types of networks deterministically. For categorical distributions, we greedily select the highest probability actions. For Gaussian distributions, we act according to the means. All discount factors were set to 0.9. The remaining hyperparameters that were constant for all experiments for the agent and nature oracles are indicated in Table \ref{table:hyperparams}. For \textbf{Robust Double Oracle} experiments, all agent and nature oracles were run for 100 training epochs. For \textbf{Agent Oracle} experiments, DDLPO was run for 100 training epochs for the synthetic and ARMMAN domains and 200 epochs for the SIS domain.

\textbf{$\lambda$-network: } Critical to training the $\lambda$-network is cyclical control of the temperature parameter that weights the entropy term in the actor loss functions. Recall that the $\lambda$-network is only updated every \texttt{n\_subepochs}. In general, after each update to the $\lambda$-network, we want to encourage exploration so that actor networks explore the new part of the state space defined by updated predictions of $\lambda$. However, after \texttt{n\_subepochs} rounds, we will use the cost of the sampled actor policies as a gradient for updating the $\lambda$-network, and that gradient will only be accurate if the actor policy has converged to the optimal policies for the given $\lambda$ predictions. Therefore, we also want to have little or no exploration in the round before we update the $\lambda$-network. In general, we would also like the entropy of the policy network to reduce over time so that the actor networks and $\lambda$-networks eventually both converge.

To accomplish both of these tasks, the weight (temperature) of the entropy regularization term in the loss function of the actor network will decay/reset according to two processes. The first process will linearly decay the temperature from some positive, but time-decaying  starting value (see next process) $\tau_t$ immediately after each $\lambda$-network update, down to 0 after $\texttt{n\_subepochs}$. The second process will linearly decay the temperature from a maximum $\tau_0$ (\emph{start entropy coeff} in Table~\ref{table:hyperparams}) down to $\tau_{\min}$ (\emph{end entropy coeff} in Table~\ref{table:hyperparams}) by the end of training. 

We found that it also helps to train the actor network with no entropy and with the $\lambda$-network frozen for \emph{lambda freeze epochs} rounds before training is stopped (Table~\ref{table:hyperparams}).

\textbf{Double Oracle: }In all experiments in the main text, we initialize the agent strategy list with HO, HM, and HP, and the nature strategy list with pessimistic, mean, and optimistic nature strategies, then run RR-DPO for 6 iterations. This produces a set of 8 agent strategies, 8 nature strategies, a table where each entry represents the regret of each agent pure strategy (row) against each nature pure strategy (column), and an optimal mixed strategy over each set that represents a Nash equilibrium of the minimax regret game given in the table. The regret table is computed by first computing the returns of each agent/nature pure strategy combination, then subtracting the max value of each column from all entries in that column (i.e., the best agent strategy for a given nature strategy gets 0 regret). The regret of RR-DPO is reported as the expected utility corresponding to the Nash equilibrium of the regret game given by the table, once that regret table is normalized to account for the returns of baselines (next). 

After this main loop completes, we then compute the regret of the baselines by evaluating each baseline policy against each pure strategy in the nature strategy list. Then, we also run the nature oracle against each baseline policy to find a nature strategy that should maximize the regret of that baseline. The regret for each baseline is reported as the max regret against this new nature strategy, as well as all pure nature strategies from the main RR-DPO loop. 

\begin{table}[t]
    \centering
    \begin{tabular}{|l|r|}
    \toprule
                 \textbf{parameter} &   \textbf{value} \\
    \midrule
                     \textit{\textbf{agent}} &         \\
                clip ratio & 2.0e+00 \\
      lambda freeze epochs & 2.0e+01 \\
       start entropy coeff & 5.0e-01 \\
         end entropy coeff & 0.0e+00 \\
       actor learning rate & 2.0e-03 \\
      critic learning rate & 2.0e-03 \\
      lambda learning rate & 2.0e-03 \\
          trains per epoch & 2.0e+01 \\
      n\_subepochs & 4.0e+00 \\
                           &         \\
                    \textit{\textbf{nature}} &         \\
                clip ratio & 2.0e+00 \\
      lambda freeze epochs & 2.0e+01 \\
       start entropy coeff & 5.0e-01 \\
         end entropy coeff & 0.0e+00 \\
      actorA learning rate & 1.0e-03 \\
     criticA learning rate & 1.0e-03 \\
      actorB learning rate & 5.0e-03 \\
     criticB learning rate & 5.0e-03 \\
      lambda learning rate & 2.0e-03 \\
          trains per epoch & 2.0e+01 \\
      n\_subepochs & 4.0e+00 \\
      n\_sims & 2.5e+01 \\
    \bottomrule
    \end{tabular}
    \caption{Hyperparameter settings for agent and nature oracles for all experiments.}
    \vspace{-5mm}
    \label{table:hyperparams}
\end{table}

\textbf{Hawkins Baselines: }The Hawkins policies are implemented with gurobipy 9.1.2, a Python wrapper for Gurobi (9.0.3) \citep{gurobi} following the LP given in \citet{hawkins2003langrangian} equation 2.5 to compute $\lambda$ and $Q(s,a,\lambda)$ for each arm and the integer program in equation 2.12 to select actions.

\textbf{RLvMid Baseline: }We found that RLvMid found effective policies for the nature strategy it was trained against (as evidenced in Figure \ref{fig:all_experiments})(a-f), but that that learned policy could be brittle against other nature strategies. This is likely because different nature strategies produce different distributions of states, meaning RLvMid would fit policies well to states seen when planning against the mean nature strategy, but underfit its policies for states seen more often in different distributions. 
\balance However, the lone RLvMid baseline policy can somewhat correct for this effect by training an ensemble of policies against slight perturbations of the mean nature strategy that adjust the parameter values output by nature by a small $\epsilon$. In all experiments we train 3 RLvMid policies against 3 random perturbations of the mean nature strategy, then report the regret of RLvMid as the minimum of the max regrets returned by any of the 3.

\begin{figure*}[t]
    \centering
    \includegraphics[width=0.9\linewidth]{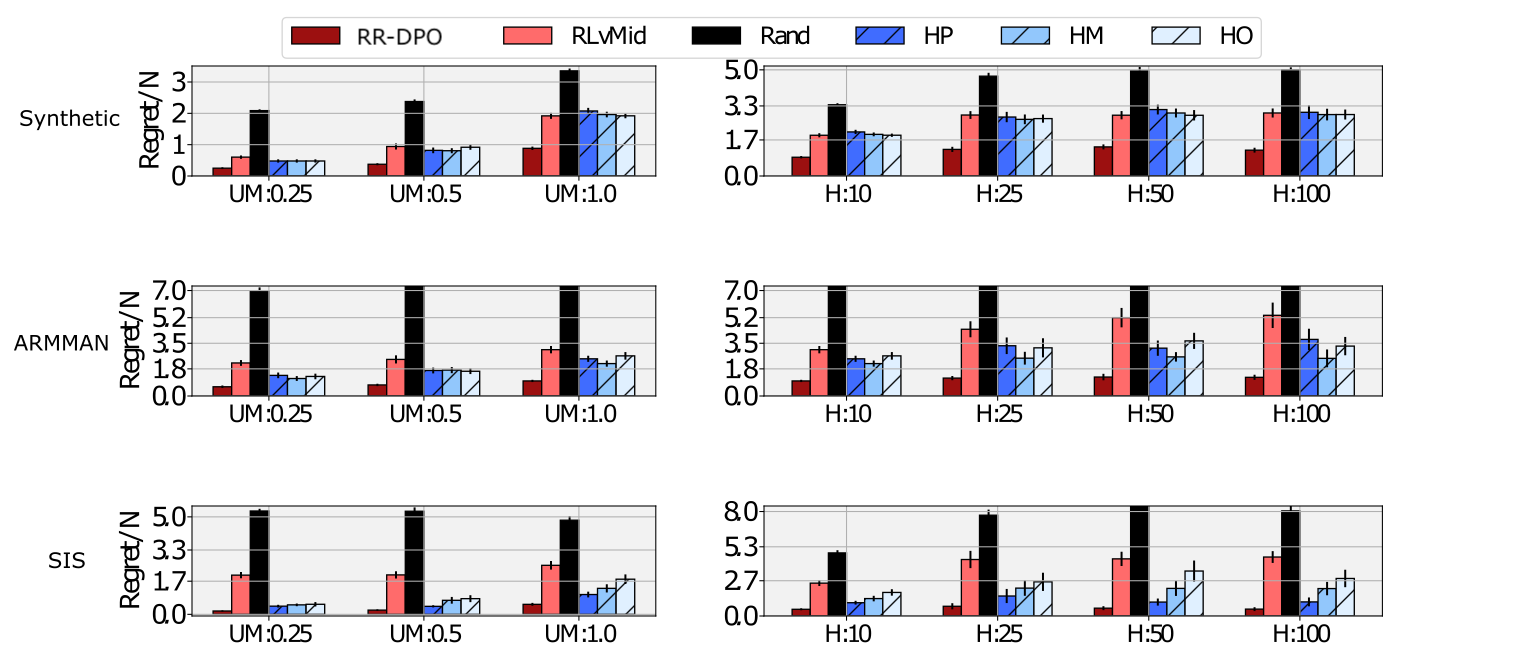}
    \caption{(Left column) varies the uncertainty intervals to be 0.25, 0.5 and 1.0 times their widths (UM = uncertainty multiplier). The gap between our robust RR-DPO method and non-robust methods becomes larger as the uncertainty interval increases, and our robust algorithm RR-DPO always provides the lowest regret policies. (Right column) varies the horizon H in {10, 25, 50, 100}. As expected, the gap between RR-DPO and the baselines either stays the same, or increases as H is increased, further demonstrating the robustness of our algorithm to various parameters.}
    \label{fig:um_and_h_sensitivity_analysis}
\end{figure*}


\end{document}